%% file: WACV_2027.tex
\documentclass[10pt,twocolumn,letterpaper]{article}

\usepackage[pagenumbers]{wacv} 

\input{preamble}

\newtheorem{theorem}{Theorem}[section]
\newtheorem{lemma}[theorem]{Lemma}

\definecolor{wacvblue}{rgb}{0.21,0.49,0.74}
\usepackage[pagebackref,breaklinks,colorlinks,allcolors=wacvblue]{hyperref}

\def\confName{WACV}
\def\confYear{2027}

\title{T-QPM: Enabling Temporal Out-Of-Distribution Detection and Domain Generalization for Vision-Language Models in Open-World}

\author{Aditi Naiknaware\\
San Diego State University\\
{\tt\small anaiknaware7153@sdsu.edu}
\and
Salimeh Sekeh\\
San Diego State University\\
{\tt\small ssekeh@sdsu.edu}
}

\begin{document}
\maketitle
\input{sec/0_abstract} 
\input{sec/1_intro}
\input{sec/2_related_work} 

\input{sec/3_methodology}

\input{sec/4_theory}

\input{sec/5_experiments}

\input{sec/6_conclusion}
\input{sec/Acknowledgement}
{
    \small
    \bibliographystyle{ieeenat_fullname}
    \bibliography{main}
}
\clearpage
\input{sec/X_supp}

\end{document}

%% file: preamble.tex
\usepackage{wrapfig}

\usepackage{graphicx}
\usepackage{makecell}
\usepackage{amsthm}

\usepackage[accsupp]{axessibility}

\usepackage{multirow}
\usepackage{algorithm}
\usepackage{algpseudocode}
\usepackage{tcolorbox}
\usepackage{placeins}
\tcbuselibrary{skins} 
\usepackage{placeins}
\usepackage{float}

\usepackage{booktabs}

%% file: sec/0_abstract.tex
\begin{abstract}

Out-of-distribution (OOD) detection remains a critical challenge in open-world learning, where models must adapt to evolving data distributions. While recent vision-language models (VLMS) like CLIP enable multimodal OOD detection through Dual-Pattern Matching (DPM), existing methods typically suffer from two major shortcomings: (1) They rely on fixed fusion rules and assume static environments, failing under temporal drift; and (2) they lack robustness against covariate shifted inputs. In this paper, we propose a novel two-step framework to enhance OOD detection and covariate distribution shift robustness in dynamic settings. We extend the dual-pattern regime into Temporal Quadruple-Pattern Matching (T-QPM). First, by pairing OOD images with text descriptions, we introduce cross-modal consistency patterns between ID and OOD signals, refining the decision boundary through joint image-text reasoning. Second, we address temporal distribution shifts by learning lightweight fusion weights to optimally combine semantic matching and visual typicality. To ensure stability, we enforce explicit regularization based on Average Thresholded Confidence (ATC), preventing performance degradation as distributions evolve. Experiments on temporally partitioned benchmarks demonstrate that our approach significantly outperforms static baselines, offering a robust, temporally-consistent framework for multimodal OOD detection in non-stationary environments.
\end{abstract}

%% file: sec/1_intro.tex
\section{Introduction}
\label{sec:intro}

As the demand for intelligent systems grows, the need for computer vision algorithms and foundation models to handle open-world scenarios becomes increasingly paramount. One important characteristic of the open world for vision-language models (VLMs) is that intelligent systems will encounter new contexts and images that were not seen during training, requiring safe handling of unseen examples (out-of-distribution (OOD)  detection) and adaptation to distribution-shifted inputs (domain generalization) in temporal environments~\cite{zhu2024owl}. Noticeably, the vast majority of VLMs have been driven by the closed-world setting~\cite{radford2021clip, ming2022mcm}, where the label space is assumed fixed and the data distribution stationary. An open-world learning (OWL) paradigm on wild data~\cite{katz2022training} is built upon two parts: unknown rejection (OOD detection) and novel class discovery (distribution shift generalization) under dynamic domains. Within the OWL context, in-distribution (ID) refers to data drawn from the same distribution as the training set---the data that the model is expected to handle reliably. Prior work in both OOD detection and distribution shift has primarily focused on two categories: (1) \textbf{covariate shift} refers to inputs that belong to the same label space as the training data but differ due to changes in the input distribution~\cite{ye2022ood, koh2021wilds}, such as a dog image corrupted with Gaussian noise remaining labeled as ``dog'' yet degrading model performance; and (2) \textbf{semantic shift} occurs when entirely new classes are introduced at test time~\cite{yang2024generalized, ye2022ood}, such as a classifier trained on cats and dogs encountering an elephant.

While recent advances in OOD detection for VLMs have shown great promise.
Notably, Maximum Concept Matching (MCM)~\cite{ming2022mcm} leverages softmax-scaled cosine similarity between visual and textual concept prototypes for zero-shot OOD detection, and Dual-Pattern Matching (DPM)~\cite{zhang2023dpm} efficiently adapts CLIP for OOD detection by exploiting both visual and textual ID patterns, these methods lack several fundamental aspects of OWL: (1) they largely overlook temporal dynamics, the fact that data distributions may evolve over time due to changing environments, user behavior, or data sources~\cite{yao2022wild}; (2) they neglect covariate shift and domain generalization during OOD detection;
and (3) they limit OOD evaluation to unimodal image inputs, leaving the rich linguistic signal of VLMs underexploited. Without mitigation, temporal shifts can cause gradual but systematic performance degradation.
For example, a perception system trained on one year's traffic patterns may underperform as road construction, seasonal changes, or evolving driving behaviors shift the data distribution over time~\cite{yao2022wild, cai2024continuous}.

{\bf Our Contribution:} In this paper, we propose \textbf{T-QPM} (Temporal Quadruple Pattern Matching), a novel multimodal OOD detection framework designed for open-world deployment under continuously evolving distributions. T-QPM builds on frozen CLIP backbones and operates over image-caption pairs, enabling richer cross-modal interaction than DPM approach. T-QPM explicitly models temporal distribution shift by incorporating a caption-aware temporal regularization loss that stabilizes confidence-based decision boundaries across timesteps, jointly optimizing for ID classification, covariate shift robustness, temporal consistency, and semantic OOD detection. We provide theoretical study linking temporal consistency to generalization error bound. Our experimental results show T-QPM outperforms DPM baseline significantly on wild data.

%% file: sec/2_related_work.tex
\section{Related Work}

Recent CLIP-based OOD detectors exploit vision-language representations through zero-shot scoring~\cite{ming2022mcm,esmaeilpour2022zero}, global-local feature matching~\cite{miyai2025glmcm}, negation semantics~\cite{wang2023clipn}, prompt learning~\cite{miyai2023locoopfewshotoutofdistributiondetection,jiang2024negativelabelguidedood,li2024negprompt}, and self-calibrated tuning to suppress spurious features~\cite{yu2024sct}. 
DPM~\cite{zhang2023dpm} most directly informs T-QPM by combining visual and textual ID patterns, which we extend to four cross-modal signals with temporal awareness. 
For open-world generalization under distribution shift, SCONE~\cite{bai2025feedbirdssconeexploiting} jointly handles covariate and semantic shift on wild data, Temp-SCONE~\cite{naiknaware2025tempscone} extends this to temporal settings, HYPO~\cite{bai2024hypo} learns provably domain-invariant hyperspherical representations, and Meta-OOD~\cite{wu2023meta} enables few-shot adaptation to novel OOD distributions. 
On the temporal side, benchmarks such as CLEAR~\cite{lin2022clearbenchmarkcontinuallearning}, Wild-Time~\cite{yao2022wild}, and WILDS~\cite{koh2021wilds} motivate methods like~\cite{cai2024continuous} and CODA~\cite{chang2025coda}, which tackle continuously shifting and concept-drifting domains without target-domain access, but none address multimodal OOD detection, which T-QPM uniquely unifies with temporal robustness in a single caption-aware framework. Extended related work is provided in the supplementary material (SM). 

%% file: sec/3_methodology.tex
\section{Methodology}
\subsection{Preliminaries and Problem Setup}
We start with preliminaries to lay the necessary context, followed by a clear description of VLMS for OOD detection. 
We consider a deployed classifier $f_\theta: \mathcal{X} \rightarrow \mathbb{R}^K$ trained on a labeled in-distribution (ID) dataset $ \mathcal{D}_{\text{ID}} = \{(x_i, y_i)\}_{i=1}^n$, drawn \textit{i.i.d.} from the joint data distribution $\mathbb{P}_{\mathcal{XY}}$. The function $f_\theta$ predicts the label of an input sample $\mathbf{x}$ as $\hat{y}(f(\mathbf{x})):=\arg\max_y f^y(\mathbf{x})$. Define $\mathbb{P}_{\hbox{in}}$, the marginal distribution of the labeled data $(\mathcal{X}, \mathcal{Y})$, which is also referred to as the in-distribution. $\mathbb{P}_{\hbox{out}}^{type}$ is the marginal distribution out of $\mathbb{P}_{\mathcal{X}'\mathcal{Y}'}$ on $\mathcal{X}'$, where the input space undergoes "type" shifting and the joint distribution has the same label space or different label space (depending to the "type"). 
We consider a generalized characterization of the open world setting with two types of OOD 
\begin{equation}\label{eq.wild}
{\mathbf{P}}_{\hbox{wild}}= (1-\sum_{type} \pi_{type}) {\mathbb{P}}_{\hbox{in}} +\sum_{type}\pi_{type} {\mathbb{P}}_{\hbox{out}}^{type}, 
\end{equation}
where $type=\{\hbox{semantic},\hbox{covariate}\}$, where $\pi_{type}, \sum\limits_{type}\pi_{type}\in(0,1)$.\\
{\it Covariate OOD type:} Taking autonomous driving as an example, a model trained on ID data with sunny weather may experience a covariate shift due to foggy/snowy weather. Under such a covariate shift, a model
is expected to generalize to the OOD
data—correctly predicting the sample into one of the known
classes (e.g., car), despite the shift. $\mathbb{P}_{\hbox{out}}^{cov}$ is the marginal distribution
of covariate shifted data $(\mathcal{X}',\mathcal{Y})$ with distribution 
$\mathbb{P}_{\mathcal{X}' \mathcal{Y}}$, where the joint distribution has the same label space as the training data, yet the input
space undergoes shifting in domain. \\
{\it Semantic OOD type:} In autonomous driving example,  the model
may encounter a semantic shift, where samples are from
unknown classes (e.g., bear) that the model has not been
exposed to during training. $\mathbb{P}_{\hbox{out}}^{sem}$ is the marginal distribution when wild data does not belong to any known categories $Y= \{1,2,...,K\}$ and therefore should be detected as OOD sample. To detect the semantic OOD data, we train OOD detector which is a ranking function $g_\theta: \mathcal{X}\mapsto \mathbb{R}$ with parameter $\theta$: if  $g_\theta(\mathbf{x}) \leq \lambda$ then the sample $\mathbf{x}$ is OOD example. 
The threshold value $\lambda$ is typically chosen so that a high fraction of ID data is correctly classified. This means that the detector $g_\theta$ should predict semantic OOD data as OOD and o.w predict as ID. 

\noindent\textbf{VLMs for OOD Detection.} VLMs exemplified by CLIP, consist of two aligned encoders: a visual encoder $\phi^V$ and a text encoder $\phi^T$. The visual encoder maps an input image $x$ to a $d$-dimensional feature representation $\mathbf{F}_v(x) \in \mathbb{R}^d$, while the text encoder maps a textual prompt $p$ to a semantic embedding $\phi^T(p) \in \mathbb{R}^d$. Both embeddings lie in a shared representation space, enabling direct similarity comparison. For a downstream classification task with label space $\mathcal{Y} = \{y_1, \ldots, y_K\}$, a prompt template is instantiated for each class to obtain class-specific textual descriptions. Each class embedding is normalized to produce a set of prototype vectors $\{\mathbf{t}_k\}_{k=1}^K$, where $\mathbf{t}_k \in \mathbb{R}^d$. These normalized text embeddings form a cosine-similarity classifier in the joint embedding space.

Given an image $x$, the compatibility between the visual feature $\mathbf{F}_v(x)$ and each class prototype $\mathbf{t}_k$ is measured via cosine similarity. These similarity scores are scaled by a temperature parameter and converted into a categorical distribution over $\mathcal{Y}$ using a softmax transformation. The resulting predictive distribution reflects the semantic alignment between the image and the set of textual class descriptions.
In CLIP-based OOD detection, the text embeddings $\{\mathbf{t}_k\}$ serve as fixed classifier weights. Post-hoc scoring functions such as Maximum Softmax Probability (MSP)~\cite{hendrycks2017baseline} or Energy-based scores~\cite{liu2020energy} are applied to the resulting logits or probability distribution. The underlying assumption is that ID samples yield higher confidence or lower energy compared to OOD samples.

\noindent\textbf{Problem Setup.}
We consider the problem of OOD detection under \textit{temporal
distribution shift}. At each timestep $t \in \{0, 1, \ldots, \mathcal{T}\}$,
a data distribution $\mathcal{D}_t$ over image-caption pairs $(x, c)$
is observed, where the visual distribution shifts gradually across
timesteps. The ID label set
$\mathcal{Y} = \{y_1, \ldots, y_K\}$ remains fixed, but the visual
appearance of ID classes evolves over time, inducing \textit{temporal shift}. At test time, each pair $(x, c)$ must be classified
as ID or OOD with respect to $\mathcal{Y}$, without access to OOD
labels during training.

Formally, at each timestep $t$ we have a set of labeled ID
training and covatiate shifted samples $\mathcal{D}_t^{\mathrm{train}} = \{(x_i, y_i)\}$
drawn from the current ID distribution, along with unlabeled test
samples $\{(x, c)\}$ that may be either ID or OOD. The goal is to
learn a scoring function $S(x, c, t) \in \mathbb{R}$ such that ID
samples consistently score above a fixed threshold $\delta$ across
all timesteps, while OOD samples score below it, and such that the
scoring function remains robust to both temporal drift and covariate
perturbations of the input.

Our {\bf T-QPM} method 
extends DPM~\cite{zhang2023dpm} to this setting by: (i) incorporating
all four cross-modal pairings between ID and test representations
including image and caption modalities, (ii) adapting visual prototypes
per timestep to handle temporal drift, (iii) learning a lightweight
fusion of the four scores with only 2 trainable parameters, and (iv)
enforcing covariate robustness through explicit consistency
regularization during training.

\begin{figure*}[!t]
   \centering
    \includegraphics[width=\linewidth]{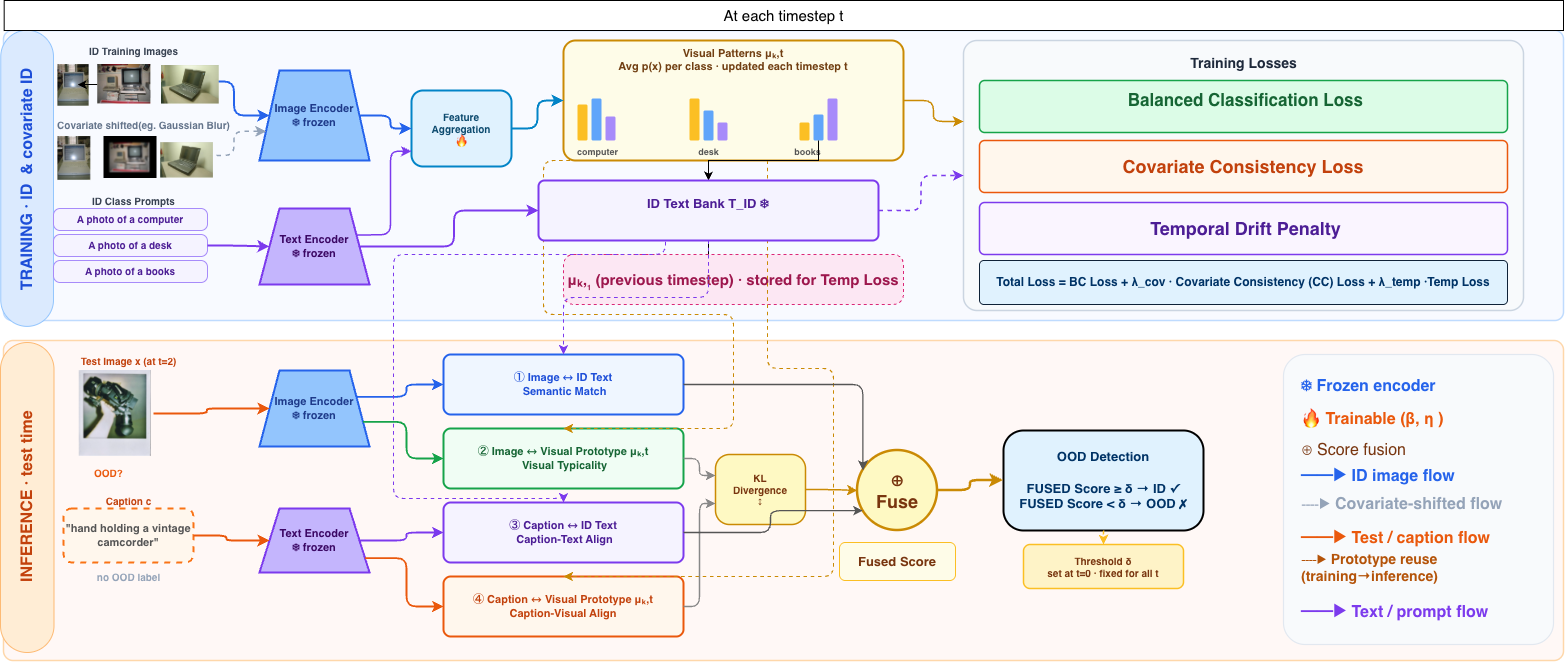}
    \caption{{\textbf{T-QPM Overview:} At each timestep, ID images and their covariate-shifted views are encoded to build timestep-specific visual prototypes alongside a fixed ID Text Bank. At inference, four cross-modal scores between the test image, caption, and ID representations are fused to produce the final OOD decision.}}
    \label{fig:overview} 
\end{figure*}

\subsection{T-QPM for OOD Detection}
Our T-QPM method is designed mainly based on four phases as described below. \\
\noindent\textbf{Phase I: Text Pattern Construction.}
We first construct reference patterns using the text encoder of the frozen VLM. For each ID class $k \in \{1,\dots,K\}$,
we employ prompt ensembling over $P$ templates to obtain robust text representations. Let $\{p_k^{(1)}, \dots, p_k^{(P)}\}$ be the set of prompts for class $k$. The class text embedding is computed as:
\begin{align}\label{eq:class-text-embedding}
\mathbf{t}_k = \mathrm{Normalize}\!\left(\sum_{i=1}^{P}
\mathrm{Normalize}\!\left(\phi^T(p_k^{(i)})\right)\right),
\end{align}
where $\phi^T$ denotes the frozen CLIP text encoder and
normalization is the $\ell_2$-norm. Collecting all ID class embeddings yields
the \textit{ID text bank}
$\mathbf{T}^{\mathrm{ID}} = [\mathbf{t}_1, \dots, \mathbf{t}_K]^\top \in \mathbb{R}^{K \times d}$,
where $d$ is the embedding dimension. $\mathbf{T}^{\mathrm{ID}}$ is computed
once and kept fixed throughout all timesteps, serving as a stable semantic
anchor while temporal variability is handled through timestep-specific
visual statistics. 

\noindent\textbf{Phase II: Temporal Visual Pattern Construction.}
To account for temporal distribution shift~\cite{wang2022learningpromptcontinuallearning}, we
compute visual reference statistics separately for each timestep, extending
the DPM framework. Given an
image $x$ at timestep $t$, the frozen CLIP-ViT encoder
produces a sequence of patch embeddings
$\mathbf{F}(x) = \phi^V(x) \in \mathbb{R}^{(N+1) \times d}$,
where $N$ is the number of spatial patches and the first token is the global
\texttt{[CLS]} token. We decompose $\mathbf{F}(x)$ into
a \emph{global token} $\mathbf{F}_v(x) = \mathbf{F}(x)[0,:] \in \mathbb{R}^{d}$
and \emph{spatial patches} $\mathbf{F}_s(x) = \mathbf{F}(x)[1:,:] \in \mathbb{R}^{N \times d}$.
For each class $k$, class-specific spatial attention weights are computed as
\begin{equation}
\mathbf{A}_k(x) = \mathrm{Softmax}\!\left(
    \frac{\mathbf{F}_s(x)\,\mathbf{t}_k}{\|\mathbf{F}_s(x)\|\,\|\mathbf{t}_k\|}
\right) \in \mathbb{R}^{N},
\end{equation}
yielding a class-attended spatial feature
$\tilde{\mathbf{f}}_k(x) = \mathbf{A}_k(x)^\top \mathbf{F}_s(x) \in \mathbb{R}^{d}$.
The final class-specific image representation combines global and attended
spatial features as $\mathbf{f}_k(x) = \gamma\,\tilde{\mathbf{f}}_k(x) + \mathbf{F}_v(x)$,
where $\gamma$ balances the contribution of spatial and global information,
and the ID logits follow as
\begin{equation}
\mathbf{z}_{\mathrm{ID}}(x) =
\bigl[\mathbf{f}_1(x)^\top \mathbf{t}_1,\; \ldots,\;
      \mathbf{f}_K(x)^\top \mathbf{t}_K\bigr]^\top \in \mathbb{R}^{K}.
\end{equation}
Converting logits to probabilities via temperature-scaled
softmax,
$\mathbf{p}(x) = \mathrm{Softmax}\!\left(\mathbf{z}_{\mathrm{ID}}(x) / T\right)$,
we estimate for each timestep $t$ and class $k$ a class-conditional visual
prototype by averaging over ID training samples from $\mathcal{D}^{\mathrm{train}}_t$:
$\{\boldsymbol{\mu}_{k,t}\}_{k=1}^{K}$, $\boldsymbol{\mu}_{k,t} =
\mathbb{E}\bigl[\mathbf{p}(x) \mid (x,y) \in \mathcal{D}_t^{\mathrm{train}},\;
y=k\bigr] \in \mathbb{R}^{K}$ defines the expected ID similarity pattern at time $t$, allowing the notion of ``normal'' ID
behavior to adapt to gradual visual distribution drift.

\noindent\textbf{Phase III: Quadruple Cross-Modal Scoring.}
For a test image $x$ with associated caption $c$ observed at timestep $t$,
we construct four complementary OOD scores corresponding to all cross-modal
pairings between ID and test representations, together realizing the complete
Quadruple Pattern Matching (QPM) framework in the temporal
setting. The \textit{Semantic Matching Score} $S_{\mathrm{ID}}$ (\textbf{OOD
image $\leftrightarrow$ ID text}) measures alignment between the test image's
visual features and the ID class text embeddings,
\begin{equation}
S_{\mathrm{ID}}(x) = \max_{k \in \{1,\ldots,K\}}
\frac{\mathbf{z}_{\mathrm{ID}}(x)[k]}{T},
\end{equation}
where higher values indicate stronger semantic similarity to known ID classes.
The \textit{Visual Typicality Score} $S_{\mathrm{VIS}}$ (\textbf{OOD image
$\leftrightarrow$ ID visual}) measures how typical the test image's probability
pattern is relative to the timestep-specific ID visual prototypes via KL
divergence~\cite{sun2022knnood},
\begin{align}
\mathrm{KL}_k(x,t) = \sum_{j=1}^{K} p_j(x)\log\frac{p_j(x)}{\mu_{k,t}[j]},\nonumber\\
S_{\mathrm{VIS}}(x,t) = -\min_{k}\;\mathrm{KL}_k(x,t),
\end{align}
where $p_j(x)$ denotes the $j$-th entry of $\mathbf{p}(x)$, and a lower KL
divergence (higher $S_{\mathrm{VIS}}$) indicates the test image's probability
pattern is consistent with typical ID images at timestep $t$. To exploit the
multimodal nature of the test data, each test image is accompanied by a natural
language caption $c$, which we encode on-the-fly using the same frozen text
encoder as $\mathbf{q}_c = \mathrm{Normalize}(\phi^T(c)) \in \mathbb{R}^d$.
The \textit{Caption-Text Alignment Score} $S_{\mathrm{CAP\text{-}T}}$
(\textbf{OOD text $\leftrightarrow$ ID text}) then measures how strongly the
caption's semantics overlap with ID class names in text space,
\begin{equation}
S_{\mathrm{CAP\text{-}T}}(x) =
\max_{k \in \{1,\ldots,K\}}\;\langle \mathbf{q}_c,\, \mathbf{t}_k \rangle,
\end{equation}
with no OOD text bank constructed --- the caption is compared directly against
the precomputed $\mathbf{T}^{\mathrm{ID}}$ at inference time. Complementing
this, the \textit{Caption-Visual Alignment Score} $S_{\mathrm{CAP\text{-}V}}$
(\textbf{OOD text $\leftrightarrow$ ID visual}) measures whether the caption's
semantics are consistent with the visual statistics of ID data at the current
timestep. Reusing $\mathbf{q}_c$, we project it through the ID logit computation
to obtain a caption probability vector,
\begin{align}
\mathbf{z}_{\mathrm{CAP}}(x) =
\bigl[\mathbf{q}_c^\top \mathbf{t}_1,\; \ldots,\;
      \mathbf{q}_c^\top \mathbf{t}_K\bigr]^\top \in \mathbb{R}^{K},\nonumber\\
\mathbf{p}_{\mathrm{CAP}}(x) =
\mathrm{Softmax}\!\left(\frac{\mathbf{z}_{\mathrm{CAP}}(x)}{T}\right),
\end{align}
and measure its typicality against $\boldsymbol{\mu}_{k,t}$ via KL divergence,
\begin{equation}
S_{\mathrm{CAP\text{-}V}}(x,t) =
-\min_{k}\;\mathrm{KL}\!\left(
\mathbf{p}_{\mathrm{CAP}}(x) \,\Big\|\, \boldsymbol{\mu}_{k,t}\right).
\end{equation}
Since both caption scores reuse $\mathbf{q}_c$ computed once per test image,
the multimodal grounding adds negligible overhead at inference.
$S_{\mathrm{CAP\text{-}T}}$ and $S_{\mathrm{CAP\text{-}V}}$ provide
complementary signals: the former detects semantic overlap in text space,
while the latter detects whether the caption's semantics conform to the
visual statistics of ID data at timestep $t$. The four scores are combined
with learnable positive weights $\beta, \eta > 0$ as
\begin{align}\label{eq:fused}
&S_{\mathrm{FUSED}}(x,t) = S_{\mathrm{ID}}(x) + \beta \cdot S_{\mathrm{VIS}}(x,t)\nonumber\\
&- \gamma_{\mathrm{cap}} \cdot S_{\mathrm{CAP\text{-}T}}(x)
- \eta \cdot S_{\mathrm{CAP\text{-}V}}(x,t),
\end{align}
where $\gamma_{\mathrm{cap}} > 0$ is a fixed hyperparameter and $\beta, \eta$
are learned. The caption-based terms are subtracted since high alignment of
the test caption with ID representations is indicative of an OOD sample
whose textual description overlaps with but whose visual content departs
from the ID distribution. To ensure positivity, $\beta$ and $\eta$ are
parameterized via the softplus function by $\beta = \log(1 + e^{\tilde{\beta}})$ and $\eta  = \log(1 + e^{\tilde{\eta}})$, 
where $\tilde{\beta}, \tilde{\eta} \in \mathbb{R}$ are trainable scalar
parameters initialized. 

\noindent\textbf{Phase IV: Threshold Calibration, Training Objective,
and Temporal OOD Detection.}
At the initial timestep $t=0$, we calibrate a decision threshold $\delta$
as the $\delta_q$-th percentile of fused scores on ID training data,
\begin{equation}
\delta = \mathrm{quantile}_{\delta_q}\!\bigl(
\{S_{\mathrm{FUSED}}(x,0) \mid (x,y) \in \mathcal{D}_0^{\mathrm{train}}\}
\bigr).
\end{equation}
 This threshold is
fixed across all subsequent times to enable consistent temporal comparison.
At each timestep $t$, we then optimize only the two fusion scalars
$\tilde{\beta}$ and $\tilde{\eta}$, keeping
all CLIP encoders frozen. This yields extreme parameter
efficiency and avoids catastrophic forgetting of pre-trained
representations~\cite{Kirkpatrick_2017}. Beyond OOD detection under temporal shift, the training
objective is explicitly designed to generalize over covariate shift by
training on both clean and corrupted views, enforcing their score consistency.
The total loss comprises three components. First, the
\textit{Balanced ID Classification Loss} computes cross-entropy symmetrically
on both clean and covariate-shifted views,
\begin{align}
\mathcal{L}_{\mathrm{ID}} =
\frac{1}{2}\Bigl(
  \mathbb{E}_{(x,y)\sim\mathcal{D}_t}
  \bigl[\mathcal{L}_{\mathrm{CE}}(\mathbf{z}_{\mathrm{ID}}(x),\, y)\bigr]\nonumber\\
+ \mathbb{E}_{(\tilde{x},y)\sim\mathcal{D}_t}
  \bigl[\mathcal{L}_{\mathrm{CE}}(\mathbf{z}_{\mathrm{ID}}(\tilde{x}),\, y)\bigr]
\Bigr),
\end{align}
where $\tilde{x}$ is the covariate-shifted of $x$
so that the model learns representations that are
simultaneously discriminative and robust to covariate perturbations. Second,
the \textit{Covariate Consistency Loss} explicitly enforces that OOD detection
scores remain stable under covariate shift~\cite{hendrycks2020augmixsimpledataprocessing},
\begin{equation}
\mathcal{L}_{\mathrm{COV}} =
\mathbb{E}_{x\sim\mathcal{D}_t}\Bigl[
  \bigl|S_{\mathrm{FUSED}}(x,t) - S_{\mathrm{FUSED}}(\tilde{x},t)\bigr|
\Bigr],
\end{equation}
directly penalizing inconsistency between the OOD scores of clean and
corrupted views and making the detection boundary robust to covariate
shifts. Third, to prevent the effective ID coverage from drifting
as the data distribution evolves, the \textit{Temporal Drift Penalty}
employs \textit{Above-Threshold Coverage (ATC)}~\cite{garg2022leveragingunlabeleddatapredict} 
as a soft differentiable proxy for the fraction of ID samples scoring
above $\delta$,
\begin{equation}
\mathrm{ATC}_t =
\mathbb{E}_{x\sim\mathcal{D}_t}\!\left[
  \sigma\!\left(\frac{\delta - S_{\mathrm{FUSED}}(x,t)}{\kappa}\right)
\right],
\end{equation}
where $\sigma(\cdot)$ is the sigmoid function and $\kappa > 0$ controls
the smoothness of the approximation. ATC is computed separately for clean
and covariate-shifted views, and changes across consecutive timesteps are
penalized as
\begin{equation}
\mathcal{L}_{\mathrm{TEMP}} =
\bigl|\mathrm{ATC}_t^{\mathrm{clean}} - \mathrm{ATC}_{t-1}^{\mathrm{clean}}\bigr|
+
\bigl|\mathrm{ATC}_t^{\mathrm{shift}} - \mathrm{ATC}_{t-1}^{\mathrm{shift}}\bigr|,
\end{equation}
discouraging abrupt changes to the fraction of ID samples above the detection
threshold at each timestep and stabilizing detection across the evolving data
stream while simultaneously accounting for covariate
robustness through the shifted ATC term. The total loss with
Lagrangian multipliers $\lambda_{\mathrm{cov}}$ and $\lambda_{\mathrm{temp}}$ is:
\begin{equation}
\mathcal{L}_{\mathrm{TOTAL}} =
\mathcal{L}_{\mathrm{ID}}
+ \lambda_{\mathrm{cov}}\,\mathcal{L}_{\mathrm{COV}}
+ \lambda_{\mathrm{temp}}\,\mathcal{L}_{\mathrm{TEMP}}.
\end{equation}
Given a test image $x$ with caption $c$ at timestep $t$, we compute all
four scores, form $S_{\mathrm{FUSED}}(x,t)$ via Eq.~\eqref{eq:fused},
and classify using the fixed threshold $\delta$:
\begin{equation}
D(x,t) =
\begin{cases}
\texttt{ID}  & \text{if } S_{\mathrm{FUSED}}(x,t) \geq \delta,\\
\texttt{OOD} & \text{otherwise.}
\end{cases}
\end{equation}
Together, the four cross-modal scores realize the complete QPM
framework in the temporal setting, with the training
objective jointly ensuring robustness to both temporal drift and covariate shift. 

\noindent The Pseudocode of all Phases I-IV are provided in SM.

%% file: sec/4_theory.tex
\section{Theory on Generalization Error}
Inspired by theoretical investigations in \cite{zhang2024best, tong2021mathematical}, we have studied generalization error ($GErr_{t+1}(f)$) of model $f_\theta$ for two time steps $t$ and $t+1$.
The generalization error at time step $t$, $GErr_t$, is standard cross entropy loss for hypothesis $f\in\mathcal{F}$ under covariant shift $\mathbb{P}^{cov}$.
We assume: {\bf [A1]} At time step $t$, $TV(p(y_t|x_t)\|\mathcal{U})$ is constant. {\bf [A2]} At time step $t$, $F_f^{\theta_1}$ The class distributions predicted by $f$ and $p^{\theta_2}(y_t|x_t)$ have same distribution with different parameter $\theta_1$ and $\theta_2$, respectively and $\theta_1-\theta_2=\delta$, where $\delta$ is bounded. {\bf [A3]} There exist a constant (say $Z_t$), s.t.
\begin{align*}
&\mathbb{E}_{\mathbb{P}_{out}^{t+1,cov}}H(p(y_{t+1}|x_{t+1}))-\mathbb{E}_{\mathbb{P}_{out}^{t,cov}}H(p(y_{t}|x_{t}))) \\
&\geq Z_t+ Conf_{t}- Conf_{t+1}.
\end{align*}
\begin{theorem}
\label{thm1}
{\bf (Main Theorem)} Let $\mathbb{P}^{t,cov}$ and $\mathbb{P}_{test}^{t,sem}$ be the covariate-shifted OOD and semantic OOD distribution.  Denote $GErr_{t+1}(f)$ the generalization error at time $t$. Let $\mathcal{L}_{reg}$ be the OOD detection loss devised for MSP detectors~\cite{hendrycks2019deepanomalydetectionoutlier}, i.e., cross-entropy
between predicted distribution $f_\theta$ and uniform distribution. Then at two time steps $t$ and $t+1$ and under assumptions {\bf [A1]}-{\bf [A3]},  we have
\begin{align}\label{thm1:main-eq}
 & GErr_{t+1}(f)-GErr_{t}(f) \geq - \tilde{\kappa}\; \Delta_{t\rightarrow t+1}^{cov,sem} - \tilde{\kappa}\; \Xi_{t\rightarrow t+1}^{sem} \nonumber\\
 & - \overline{\delta}_t^2 \;\mathbb{E}_{\mathbb{P}_{out}^{t,cov}}\left(I_F(\theta)\right)+C_{t\rightarrow t+1} + Conf_{t} - Conf_{t+1},  
  \end{align}
where $\Delta_{t\rightarrow t+1}^{cov,sem}$ is defined 
based on disparity discrepancy with total variation distance) (TVD) at timestep $t$ and $t+1$ that measures the dissimilarity of covariate-shifted OOD and semantic OOD. $\Xi_{t\rightarrow t+1}^{sem}$ is defined based on OOD detector.
  And $C_{t\rightarrow t+1}=C_{t+1}-C_t+B_t+Z_t$ and $\delta_t$ are constants and  $\overline{\delta}_t^2=\frac{log e}{2}\delta_t^2$. 
  \(Conf(f_\theta):=\max_{j\in \mathcal{Y}} f_j(x)\) is maximum confidence, and $I_f(\theta)$ is Fisher Info.~\cite{cramer1999mathematical}.
\end{theorem}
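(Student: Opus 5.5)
The bound in \eqref{thm1:main-eq} will come from a telescoping decomposition of $GErr_{t+1}(f)-GErr_t(f)$, following the template of \cite{zhang2024best}. At each timestep the covariate-shift generalization error is the expected cross entropy $\mathbb{E}_{\mathbb{P}_{out}^{t,cov}}[H(p(y_t|x_t), f(x_t))]$. We split it as entropy plus KL divergence:
\[
\mathbb{E}\,H(p,f)=\mathbb{E}\,H(p(y_t|x_t))+\mathbb{E}\,\mathrm{KL}\bigl(p(y_t|x_t)\,\|\,f(x_t)\bigr).
\]
The difference between the two timesteps then separates into an entropy difference and a KL difference, and we lower bound each piece.

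The entropy difference is handled directly by \textbf{[A3]}. It supplies the constant $Z_t$ and the confidence terms $Conf_t-Conf_{t+1}$ that appear in the statement.

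For the KL pieces we use \textbf{[A2]}. The predictive distribution $F_f^{\theta_1}$ and the true conditional $p^{\theta_2}$ lie in the same parametric family, with $\theta_1-\theta_2=\delta_t$. A second-order Taylor expansion of the KL divergence in the parameter gives
\[
\mathrm{KL}(p^{\theta_2}\|F^{\theta_1})=\tfrac{\log e}{2}\,\delta_t^\top I_F(\theta)\,\delta_t+o(\|\delta_t\|^2)\le \overline{\delta}_t^2\, I_F(\theta),
\]
using boundedness of $\delta_t$ to absorb the remainder into a constant. We take expectations over $\mathbb{P}_{out}^{t,cov}$ and subtract the time-$t$ KL. This yields the term $-\overline{\delta}_t^2\,\mathbb{E}(I_F(\theta))$, with leftover constants collected into $C_{t+1}-C_t$.

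For the time-$(t+1)$ KL term we use a change-of-measure argument. First we pass from $\mathbb{P}^{t+1,cov}$ back to $\mathbb{P}^{t,cov}$ and to the semantic distribution. Here we use that the loss is bounded by some $\tilde\kappa$, so that $|\mathbb{E}_P g-\mathbb{E}_Q g|\le \tilde\kappa\,TV(P,Q)$. This produces the disparity-discrepancy term $-\tilde\kappa\,\Delta^{cov,sem}_{t\to t+1}$.

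We then relate the semantic-OOD behaviour to the MSP regularizer $\mathcal{L}_{reg}$, the cross entropy between $f_\theta$ and $\mathcal{U}$. By \textbf{[A1]}, $TV(p(y_t|x_t)\|\mathcal{U})$ is a constant, which we absorb into $B_t$. A triangle inequality in TV between $p$, $\mathcal{U}$ and $f$ then controls the remaining term by $\tilde\kappa\,\Xi^{sem}_{t\to t+1}$, which is defined from the detector loss.

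Summing the entropy bound, the Fisher-information bound and the two TV bounds gives \eqref{thm1:main-eq}, with $C_{t\to t+1}=C_{t+1}-C_t+B_t+Z_t$.

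The main obstacle is the change-of-measure step. Cross entropy is unbounded, so transferring it across distributions with a TV factor requires a bounded surrogate: clip the log-probabilities so that $\tilde\kappa=\sup|\log f|$ is finite, e.g.\ via temperature-softmax outputs bounded away from $0$. The $\Delta$ and $\Xi$ quantities must then be defined so that exactly these TV terms appear. A second delicate point is the Taylor remainder in the Fisher step. I would handle it by treating the boundedness in \textbf{[A2]} as giving a uniform bound on third derivatives, and putting the remainder into $C_{t+1}-C_t$.
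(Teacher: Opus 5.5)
Your \textbf{[A3]} step and your Fisher step are sound at the paper's level of rigor, but they are routed differently. The paper upper-bounds $GErr_t(f)$ with $KL\le\frac{\log e}{2}(TV+\mathcal{X}^2)$. It sends the $TV$ part through $\mathcal{U}$ using \textbf{[A1]}, Lemma~\ref{lemma2} and Lemma~\ref{lemma3}; this is where the time-$t$ halves of $\Delta^{cov,sem}_{t\rightarrow t+1}$ and $\Xi^{sem}_{t\rightarrow t+1}$ come from. It applies the Fisher expansion only to the $\mathcal{X}^2$ part (Lemma~\ref{lemma4}). Expanding the whole KL, as you do, gives the same coefficient $\overline{\delta}_t^2$. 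Losing the time-$t$ halves of $\Delta$ and $\Xi$ is harmless, since they enter with a minus sign.

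The real gap is the step you flag as the main obstacle. The inequality $|\mathbb{E}_P g-\mathbb{E}_Q g|\le\tilde{\kappa}\,TV(P,Q)$ needs a single fixed function $g$. Your integrand $x\mapsto KL(p(y_{t+1}|x)\|F_f(x))$ involves the true label conditional, and this breaks both transfers:
\begin{itemize}
\item Under $\mathbb{P}_{out}^{t+1,sem}$ the inputs belong to no class in $\mathcal{Y}$, so this integrand does not exist there.
\item Under $\mathbb{P}_{out}^{t,cov}$ it is not the time-$t$ integrand, because $p(y_t|x_t)$ and $p(y_{t+1}|x_{t+1})$ differ.
\end{itemize}
Even granting the transfer, you would get input-space total variations such as $TV(\mathbb{P}_{out}^{t+1,cov},\mathbb{P}_{out}^{t,cov})$, which appear nowhere in the statement. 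There, $\Delta^{cov,sem}_{t\rightarrow t+1}$ is built from the disparity discrepancies $d_{\mathcal{F}}$ between covariate and semantic OOD at the same time step. You cannot redefine $\Delta$ and $\Xi$ to match whatever your argument outputs. Clipping addresses only unboundedness, not this label dependence.

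The paper gets the time-$(t+1)$ terms from Theorem~1 of \cite{zhang2024best}. Its transfer step (Lemma~\ref{lemma2}) acts on the label-free, $[0,1]$-valued function $x\mapsto TV(F_f(x)\|\mathcal{U})$. The semantic side is then controlled by Pinsker's inequality $TV(F_f\|\mathcal{U})\le\sqrt{\frac{1}{2}(\mathcal{L}_{reg}(f)-\log K)}$ (Lemma~\ref{lemma3}). There $\tilde{\kappa}$ only collects the coefficients $\frac{1}{2\kappa}$ and $\frac{\log e}{2}$, and no bound on the loss is needed.

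For the inequality as stated you can do even less. Note that $\mathbb{E}_{\mathbb{P}_{out}^{t+1,cov}}KL(p(y_{t+1}|x_{t+1})\|F_f(x_{t+1}))\ge 0$, while $\Delta^{cov,sem}_{t\rightarrow t+1}\ge 0$ and $\Xi^{sem}_{t\rightarrow t+1}\ge 0$ (since $\mathcal{L}_{reg}(f)-\log K=KL(\mathcal{U}\|F_f)\ge 0$). So drop the time-$(t+1)$ KL and append $-\tilde{\kappa}(\Delta^{cov,sem}_{t\rightarrow t+1}+\Xi^{sem}_{t\rightarrow t+1})\le 0$ to your \textbf{[A3]} and Fisher bounds. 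This already gives the theorem, with the constants named accordingly.
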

The details and proof are deferred in the SM. Our theoretical finding demonstrates that for MSP detectors (without any OOD detection regularization), at two timesteps $t$ and $t+1$, the OOD detection objective difference conflicts with OOD generalization difference. In addition, the generalization error difference over time is not only negatively correlated with OOD detection loss that the model minimizes, it also negatively correlated to the Fisher information of the network parameter under $\mathbb{P}_{out}^{t,cov}$. The OOD generalization error at $t+1$ and $t$ is positively correlated with confidence difference over the same period. Similar to~\cite{zhang2024best} our theorem is applicable for all MSP-based OOD detectors. The inherent motivation of OOD detection methods lies in minimizing the OOD detection loss in $\mathbb{P}_{out}^{t,sem}$ under test data, regardless of the training strategies used.

%% file: sec/5_experiments.tex
\section{Experiments}
\label{sec:experiments}
We evaluate T-QPM on temporally evolving benchmarks designed to test semantic OOD detection and covariate shift robustness under continuously shifting distributions.

\noindent\textbf{Datasets.}
For ID data, we use three temporal benchmarks: CLEAR100~\cite{lin2022clearbenchmarkcontinuallearning}, CLEAR10~\cite{lin2022clearbenchmarkcontinuallearning}, and Core50~\cite{lomonaco2017core50newdatasetbenchmark}. CLEAR100 and CLEAR10 span 10 temporal buckets, each representing a distinct time period. Core50 consists of 10 sessions captured under varying backgrounds and lighting conditions, providing a complementary setting with more abrupt session-level domain shifts. Since T-QPM operates on image-caption pairs, all semantic OOD datasets are drawn from multimodal sources: COCO~\cite{lin2015microsoftcococommonobjects}, ImageNet-1K-VL-Enriched~\cite{imagenet_captions_hf}, Visual Genome~\cite{krishna2016visualgenomeconnectinglanguage}, Flickr30K~\cite{plummer2016flickr30kentitiescollectingregiontophrase}, and CC12M~\cite{changpinyo2021cc12m}. To assess covariate shift robustness, we generate perturbed variants of each ID test set using Gaussian blur and JPEG compression corruptions.

\noindent\textbf{Training Procedure.}
T-QPM is trained sequentially across timesteps, where
at each timestep the model receives image-caption pairs from the current tem-
poral distribution. In the CLEAR100/CLEAR10 setting, training proceeds from timestep 1 through timestep 10, with each bucket representing a progressively
drifted visual distribution. For Core50, the model is trained across 10 recording
sessions in order of acquisition. At each timestep, the model is updated using
the current ID data while the projection module adapts its interference weights

\noindent\textbf{Model Architectures and Optimization.} T-QPM is built on top of two frozen CLIP backbones, ViT-B/16 and ViT-B/32. All experiments use a learning rate of $3 \times 10^{-3}$ with 5 epochs per timestep. The frozen backbone ensures
that pretrained vision-language representations are preserved, while only the
projection module is updated to adapt to temporal distribution shift.

\noindent\textbf{Evaluation Protocol.}
We report results (average over 3 trials) at representative early and late timesteps, $t{=}2$ and $t{=}8$, to capture model behavior before and after substantial temporal drift(other timesteps are re-
ported in the SM). We report FPR95 and AUROC as threshold-independent
measures of OOD detection quality, alongside ID clean accuracy on the un-
perturbed test set and ID corrupted accuracy on blur- and compression-
degraded variants. 

\subsection{Comparison with Existing Methods}

Tables~\ref{tab:comparison_methods} and~\ref{tab:ood_results_early_late_vit32_full} report OOD detection performance (FPR95$\downarrow$/AUROC$\uparrow$) across three temporally evolving ID datasets and five semantic OOD datasets at early ($t{=}2$) and late ($t{=}8$) timesteps using ViT-B/16 and ViT-B/32 backbones, respectively. Across all datasets, OOD benchmarks, temporal stages, and backbone architectures, T-QPM consistently outperforms existing VLM-based OOD detection methods, including MCM, LoCoOp, and DPM, achieving the lowest FPR95 and highest AUROC.\\
The most significant improvements are observed on the challenging CLEAR100 benchmark, where temporal drift has the greatest impact on OOD detection performance. Using the ViT-B/16 backbone, T-QPM achieves an FPR95/AUROC of 17.42\%/96.66\% on COCO at the early timestep and 19.37/96.07 on Visual Genome at the late timestep, consistently outperforming all competing methods. The best overall performance is obtained on the CLEAR10--CC12M benchmark, where T-QPM achieves an FPR95 of 0.46\% and an AUROC of 99.99\% at the early timestep, while maintaining similarly strong performance at the late timestep (1.59\%/99.57\%). Comparable improvements are also observed on Core50, where T-QPM consistently ranks first across all semantic OOD datasets, demonstrating that the proposed framework generalizes well across both gradual temporal drift and abrupt session-level domain shifts. The same trends are observed for the ViT-B/32 backbone. Although all methods experience a modest reduction in absolute performance compared with ViT-B/16, T-QPM consistently achieves the best overall performance across all ID and OOD dataset combinations. Furthermore, while MCM, LoCoOp, and DPM exhibit noticeable degradation from the early to the late timestep as temporal drift accumulates, T-QPM shows substantially smaller degradation, indicating that the proposed temporal quadruple-pattern matching framework provides a more stable and robust multimodal OOD detector under continuously evolving data distributions.
\begin{table*}[t!]
\centering
\caption{Comparison with existing VLM-based OOD detection methods using ViT-B/16. Each cell reports FPR95$\downarrow$/AUROC$\uparrow$ at early ($t{=}2$) and late ($t{=}8$) timesteps.}
\label{tab:comparison_methods}
\scriptsize{
\setlength{\tabcolsep}{7.5pt}
\renewcommand{\arraystretch}{1.05}
\resizebox{\textwidth}{!}{%
\begin{tabular}{llcccccc}
\toprule
OOD & Method
& \multicolumn{2}{c}{CLEAR100}
& \multicolumn{2}{c}{CLEAR10}
& \multicolumn{2}{c}{Core50}\\
\cmidrule(lr){3-4}\cmidrule(lr){5-6}\cmidrule(lr){7-8}
& & Early & Late & Early & Late & Early & Late\\
\midrule

\multirow{4}{*}{COCO}
& MCM    & 31.10/92.80 & 39.30/90.40 & 24.80/94.70 & 32.00/92.50 & 32.80/92.30 & 41.00/90.10\\
& LoCoOp & 24.80/94.90 & 30.80/93.10 & 19.40/96.40 & 24.40/94.80 & 26.40/94.50 & 32.40/92.70\\
& DPM    & 41.53/88.16 & 46.73/85.55 &  8.61/97.28 &  9.66/96.38 & 16.40/95.60 & 24.10/93.20\\
& T-QPM  & \textbf{17.42/96.66} & \textbf{20.51/95.77} & \textbf{0.89/99.66} & \textbf{1.20/99.38} & \textbf{6.20/98.90} & \textbf{9.80/97.60}\\

\midrule
\multirow{4}{*}{IN-1K}
& MCM    & 34.40/92.00 & 42.60/89.60 & 27.60/93.90 & 34.80/91.70 & 36.70/91.00 & 44.90/88.80\\
& LoCoOp & 27.50/94.10 & 33.50/92.30 & 22.10/95.60 & 27.10/94.00 & 30.20/93.40 & 36.20/91.60\\
& DPM    & 17.58/95.74 & 22.48/94.41 &  9.51/98.48 & 14.70/94.74 & 11.20/97.80 & 17.30/95.40\\
& T-QPM  & \textbf{5.97/98.79} & \textbf{7.24/98.59} & \textbf{3.65/99.16} & \textbf{5.20/98.95} & \textbf{4.10/99.10} & \textbf{6.90/98.70}\\

\midrule
\multirow{4}{*}{Flickr30K}
& MCM    & 28.60/93.70 & 36.80/91.30 & 23.20/95.10 & 30.40/92.90 & 30.40/93.00 & 38.60/90.60\\
& LoCoOp & 22.60/95.60 & 28.60/93.80 & 18.30/96.60 & 23.30/95.40 & 24.60/95.00 & 30.60/93.20\\
& DPM    & 21.62/94.26 & 26.10/92.95 &  6.03/98.28 &  8.31/97.68 & 14.10/96.10 & 21.50/93.80\\
& T-QPM  & \textbf{7.96/98.20} & \textbf{8.95/98.06} & \textbf{1.63/99.65} & \textbf{2.60/99.07} & \textbf{5.10/99.00} & \textbf{8.20/98.10}\\

\midrule
\multirow{4}{*}{CC12M}
& MCM    & 32.00/92.50 & 40.20/90.10 & 26.90/94.20 & 34.10/92.00 & 34.10/91.80 & 42.30/89.60\\
& LoCoOp & 25.90/94.70 & 31.90/92.90 & 21.20/96.00 & 26.20/94.40 & 27.80/94.00 & 33.80/92.20\\
& DPM    & 10.23/97.74 & 12.51/97.11 &  7.35/98.53 &  9.29/98.57 & 10.30/97.80 & 15.20/95.60\\
& T-QPM  & \textbf{2.56/99.48} & \textbf{3.63/99.28} & \textbf{0.46/99.99} & \textbf{1.59/99.57} & \textbf{2.70/99.50} & \textbf{5.10/99.00}\\

\midrule
\multirow{4}{*}{VG}
& MCM    & 27.30/94.10 & 35.50/91.70 & 22.30/95.50 & 29.50/93.30 & 29.00/93.20 & 37.20/90.80\\
& LoCoOp & 21.70/96.00 & 27.70/94.20 & 17.40/97.10 & 22.40/95.50 & 23.50/95.30 & 29.50/93.50\\
& DPM    & 44.42/87.46 & 50.16/84.37 & 23.06/93.18 & 38.66/90.52 & 22.80/94.10 & 34.60/89.80\\
& T-QPM  & \textbf{16.18/96.57} & \textbf{19.37/96.07} & \textbf{11.85/97.49} & \textbf{13.75/98.12} & \textbf{10.40/97.60} & \textbf{16.90/95.80}\\

\bottomrule
\end{tabular}}}
\end{table*}
\begin{table}[h!]
\centering
\caption{Ablation study of loss components on CLEAR100 as ID and COCO as OOD using ViT-B/16.}
\label{tab:loss_ablation}
\small
\setlength{\tabcolsep}{3pt}
\renewcommand{\arraystretch}{1.05}
\resizebox{\linewidth}{!}{%
\begin{tabular}{cccc cc l}
\toprule
$\mathcal{L}_\text{ID}$ & $\mathcal{L}_\text{COV}$ & $\mathcal{L}_\text{TEMP}$ & $\mathcal{L}_\text{OOD}$ & AUROC $\uparrow$ & FPR95 $\downarrow$ & Note \\
\midrule
\checkmark & $\times$ & $\times$ & $\times$ & 91.91 & 24.12 & $\mathcal{L}_\text{ID}$ only \\
$\times$ & \checkmark & $\times$ & $\times$ & 85.51 & 39.22 & $\mathcal{L}_\text{COV}$ only \\
$\times$ & $\times$ & \checkmark & $\times$ & 89.21 & 32.92 & $\mathcal{L}_\text{TEMP}$ only \\
$\times$ & $\times$ & $\times$ & \checkmark & 88.01 & 31.02 & $\mathcal{L}_\text{OOD}$ only \\
\midrule
$\times$ & \checkmark & \checkmark & \checkmark & 93.01 & 21.72 & w/o $\mathcal{L}_\text{ID}$ \\
\checkmark & $\times$ & \checkmark & \checkmark & 96.41 & 19.62 & w/o $\mathcal{L}_\text{COV}$ \\
\checkmark & \checkmark & $\times$ & \checkmark & 95.71 & 24.92 & w/o $\mathcal{L}_\text{TEMP}$ \\
\checkmark & \checkmark & \checkmark & $\times$ & 96.91 & 16.82 & w/o $\mathcal{L}_\text{OOD}$ \\
\midrule
\checkmark & \checkmark & \checkmark & \checkmark & \textbf{97.81} & \textbf{10.28} & Full model \\
\bottomrule
\end{tabular}}
\end{table}
\begin{table}[h!]
\centering
\caption{Sensitivity analysis of $\lambda_{\rm cov}$ and $\lambda_{\rm temp}$ on CLEAR100/COCO using ViT-B/16 at $t{=}2$. Bold denotes the chosen setting.}
\label{tab:lambda_sweep}
\small
\setlength{\tabcolsep}{4pt}
\renewcommand{\arraystretch}{1.00}
\begin{tabular}{ccc@{\hspace{10pt}}ccc}
\toprule
$\lambda_{\rm cov}$ & AUROC & FPR95 & $\lambda_{\rm temp}$ & AUROC & FPR95 \\
\midrule
0.10 & 95.20 & 18.40 & 0.25 & 94.90 & 21.30 \\
0.25 & 96.80 & 13.60 & 0.50 & 96.40 & 14.70 \\
\textbf{0.50} & \textbf{97.81} & \textbf{10.28} & \textbf{1.00} & \textbf{97.81} & \textbf{10.28} \\
1.00 & 97.10 & 12.50 & 2.00 & 97.20 & 11.90 \\
2.00 & 95.60 & 17.80 & 5.00 & 95.10 & 19.60 \\
\bottomrule
\end{tabular}
\end{table}
\begin{figure}[b!]
    \centering
    \begin{subfigure}{0.7\linewidth}
        \includegraphics[width=\linewidth]{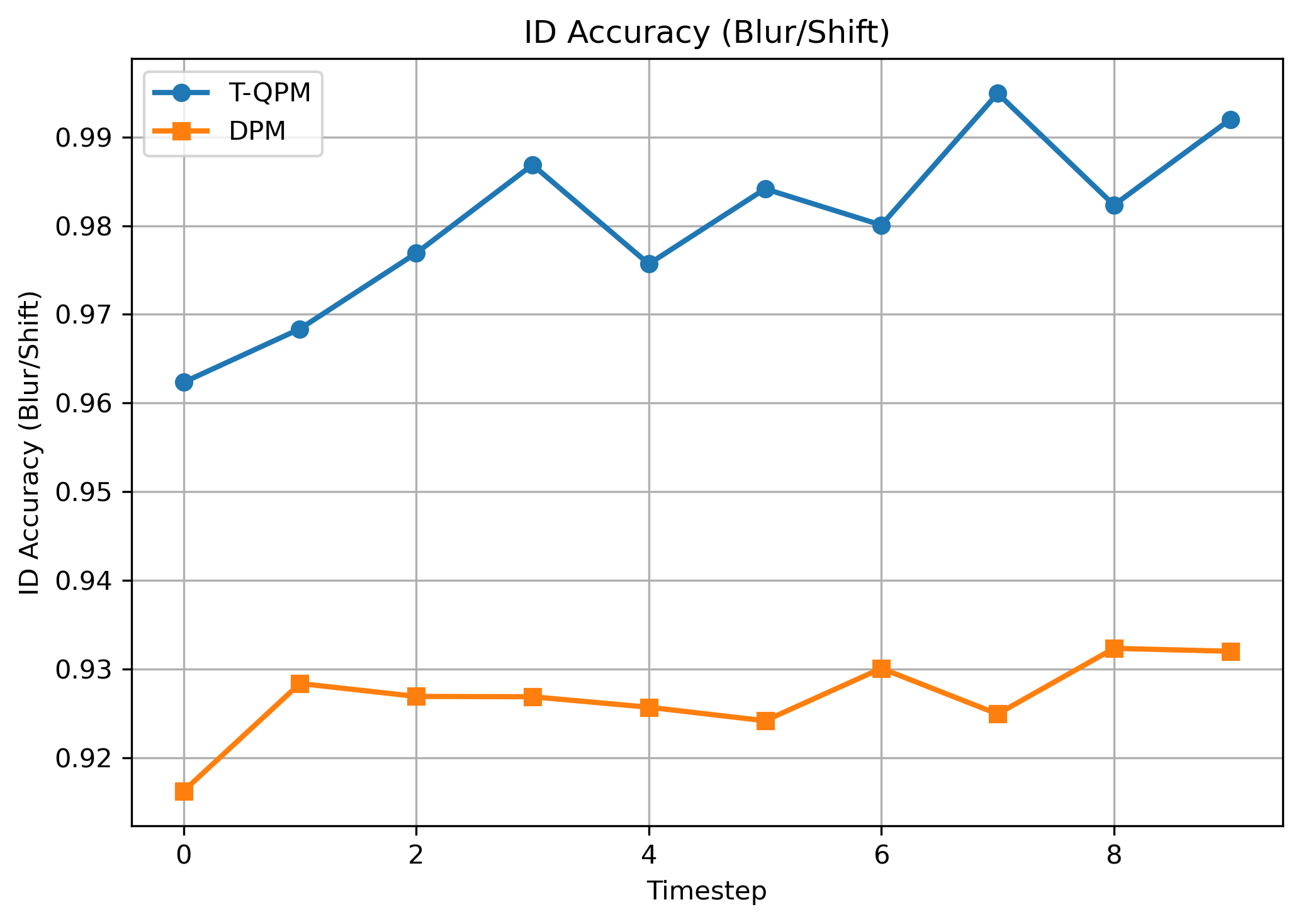}
   \end{subfigure}
      \caption{JPEG-compressed CLEAR100 test sets across all timesteps. Results are averaged over 3 trials.}
    \label{fig:acc_jpeg_blur}
\end{figure}

\subsection{Robustness against covariate shifted data}
Figure~\ref{fig:acc_jpeg_blur} report 
corrupted test sets across all 10 timesteps of CLEAR100, with COCO as the semantic OOD  dataset. 
Under JPEG compression corruption (Figure~\ref{fig:acc_jpeg}), T-QPM demonstrates an even more 
pronounced advantage. 
More strikingly, on JPEG-corrupted inputs, T-QPM exhibits a consistent upward trajectory 
across all timesteps, reaching $\sim$0.991 at $t{=}7$---while DPM remains nearly flat in  the 0.920--0.932 range throughout. This $\sim$5--6\% sustained gap under JPEG shift 
suggests that T-QPM's interference-based scoring mechanism is particularly robust to high-frequency compression artifacts, which tend to destabilize standard softmax-based confidence estimates. 
More experiments on robustness are provided in SM.

\begin{figure*}[t!]
    \centering
    \begin{subfigure}{0.32\linewidth}
        \centering
        \includegraphics[width=\linewidth]{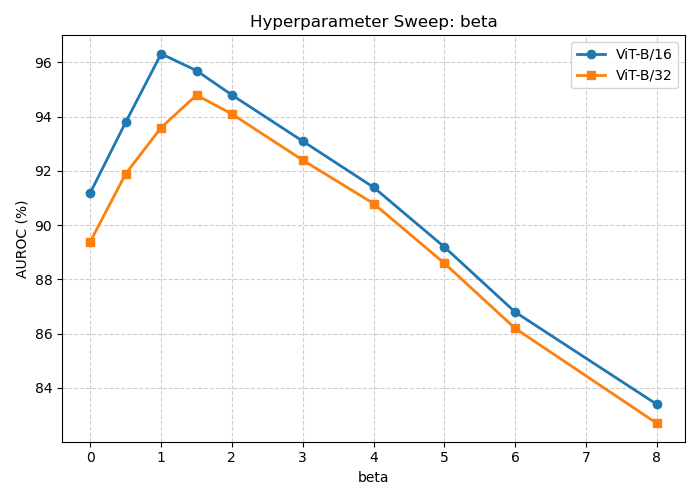}
        \label{fig:sweep_beta}
    \end{subfigure}
    \hfill
    \begin{subfigure}{0.32\linewidth}
        \centering
        \includegraphics[width=\linewidth]{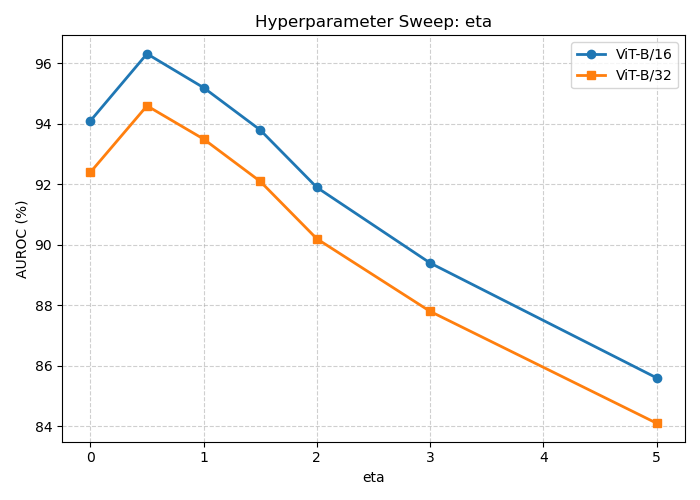}
        \label{fig:sweep_eta}
    \end{subfigure}
    \hfill
    \begin{subfigure}{0.32\linewidth}
        \centering
        \includegraphics[width=\linewidth]{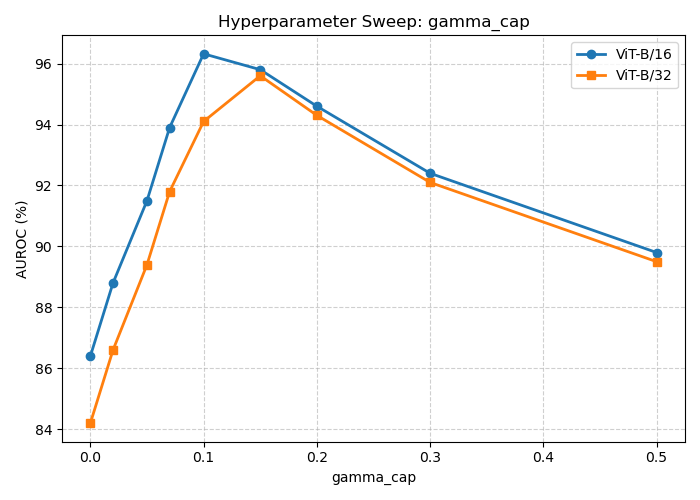}
        \label{fig:sweep_gamma}
    \end{subfigure}
    \caption{Hyperparameter sweeps for $\beta$ (left), $\eta$ (center), and $\gamma_{\mathrm{cap}}$ (right).}
    \label{fig:sweeps}
\end{figure*}

\begin{table*}[h!]
\centering
  \caption{OOD detection results across 3 ID datasets and 5 OOD datasets using ViT-32 architecture.
  Each cell shows FPR95$\downarrow$/AUROC$\uparrow$.
  Early ($t{=}2$) and late ($t{=}8$) times.}
  \label{tab:ood_results_early_late_vit32_full}
  \centering
\scriptsize{
  \setlength{\tabcolsep}{8.5pt}
  \renewcommand{\arraystretch}{1.05}
  \resizebox{\textwidth}{!}{%
  \begin{tabular}{llcccccc}
    \toprule
    OOD & Method
      & \multicolumn{2}{c}{CLEAR100}
      & \multicolumn{2}{c}{CLEAR10}
      & \multicolumn{2}{c}{Core50} \\
    \cmidrule(lr){3-4} \cmidrule(lr){5-6} \cmidrule(lr){7-8}
    & & Early & Late & Early & Late & Early & Late \\
    \midrule

    \multirow{4}{*}{COCO}
      & MCM    & 33.10/92.30 & 41.30/89.90 & 26.80/94.20 & 34.00/92.00 & 34.80/91.80 & 43.00/89.60 \\
      & LoCoOp & 26.80/94.40 & 32.80/92.60 & 21.40/95.90 & 26.40/94.30 & 28.40/94.00 & 34.40/92.20 \\
      & DPM    & 47.73/87.30 & 55.11/84.30 & 9.89/96.40 & 11.33/95.10 & 18.86/94.80 & 28.44/92.00 \\
      & T-QPM  & \textbf{19.14/96.20} & \textbf{22.96/95.00} & \textbf{0.88/99.20} & \textbf{1.34/98.60} & \textbf{6.82/98.50} & \textbf{10.98/96.90} \\
    \midrule

    \multirow{4}{*}{IN-1K}
      & MCM    & 36.40/91.50 & 44.60/89.10 & 29.60/93.40 & 36.80/91.20 & 38.70/90.50 & 46.90/88.30 \\
      & LoCoOp & 29.50/93.60 & 35.50/91.80 & 24.10/95.10 & 29.10/93.50 & 32.20/92.90 & 38.20/91.10 \\
      & DPM    & 20.13/94.90 & 26.43/93.20 & 10.93/98.00 & 17.35/93.50 & 12.88/97.00 & 20.41/94.20 \\
      & T-QPM  & \textbf{6.49/98.30} & \textbf{8.06/97.80} & \textbf{3.96/98.70} & \textbf{5.82/98.20} & \textbf{4.51/98.70} & \textbf{7.73/98.00} \\
    \midrule

    \multirow{4}{*}{Flickr30K}
      & MCM    & 30.60/93.20 & 38.80/90.80 & 25.20/94.60 & 32.40/92.40 & 32.40/92.50 & 40.60/90.10 \\
      & LoCoOp & 24.60/95.10 & 30.60/93.30 & 20.30/96.10 & 25.30/94.90 & 26.60/94.50 & 32.60/92.70 \\
      & DPM    & 24.84/93.40 & 30.80/91.70 & 6.90/97.40 & 9.79/96.40 & 16.22/95.30 & 25.37/92.60 \\
      & T-QPM  & \textbf{8.69/97.80} & \textbf{9.97/97.30} & \textbf{1.76/99.20} & \textbf{2.91/98.30} & \textbf{5.61/98.60} & \textbf{9.18/97.40} \\
    \midrule

    \multirow{4}{*}{CC12M}
      & MCM    & 34.00/92.00 & 42.20/89.60 & 28.90/93.70 & 36.10/91.50 & 36.10/91.30 & 44.30/89.10 \\
      & LoCoOp & 27.90/94.20 & 33.90/92.40 & 23.20/95.50 & 28.20/93.90 & 29.80/93.50 & 35.80/91.70 \\
      & DPM    & 11.73/96.90 & 14.75/95.90 & 8.39/97.70 & 10.86/97.30 & 11.84/97.00 & 17.94/94.40 \\
      & T-QPM  & \textbf{2.75/99.00} & \textbf{4.03/98.50} & \textbf{0.44/99.50} & \textbf{1.68/98.80} & \textbf{2.97/99.10} & \textbf{5.71/98.30} \\
    \midrule

    \multirow{4}{*}{VG}
      & MCM    & 29.30/93.60 & 37.50/91.20 & 24.30/95.00 & 31.50/92.80 & 31.00/92.70 & 39.20/90.30 \\
      & LoCoOp & 23.70/95.50 & 29.70/93.70 & 19.40/96.60 & 24.40/95.00 & 25.50/94.80 & 31.50/93.00 \\
      & DPM    & 51.06/86.60 & 59.12/83.50 & 26.45/92.30 & 45.55/89.30 & 26.22/93.30 & 40.83/88.60 \\
      & T-QPM  & \textbf{17.71/96.30} & \textbf{21.62/95.30} & \textbf{12.98/97.00} & \textbf{15.34/97.40} & \textbf{11.44/97.20} & \textbf{18.93/95.10} \\

    \bottomrule
  \end{tabular}%
  }}

\end{table*}
\subsection{Ablation Study \& Hyperparameter Sensitivity}

Table~\ref{tab:loss_ablation} examines the contribution of each loss component on CLEAR100 with COCO as OOD using ViT-B/16. When used in isolation, no single loss achieves the full model performance. The leave-one-out results show that every component contributes meaningfully. Removing $\mathcal{L}_{\mathrm{COV}}$ or $\mathcal{L}_{\mathrm{TEMP}}$ causes the largest degradation, confirming the importance of covariate consistency and temporal loss. The complete objective achieves the best AUROC \& FPR95.
Figure~\ref{fig:sweeps} reports AUROC sweeps over the fusion parameters $\beta$, $\eta$, and $\gamma_{\mathrm{cap}}$. All three parameters exhibit stable operating regions, suggesting that T-QPM does not require overly precise tuning. We further analyze the loss weights $\lambda_{\mathrm{cov}}$ and $\lambda_{\mathrm{temp}}$ in Table~\ref{tab:lambda_sweep}. Both show broad stable plateaus, and performance degrades only at extreme values.

%% file: sec/6_conclusion.tex
\section{Conclusion}

T-QPM consistently outperforms baselines across all evaluated settings, confirming that four-way scoring over image-caption pairs provides a more powerful and temporally robust OOD detection signal than dual-pattern matching alone. Notably, the performance gap widens as temporal drift accumulates, suggesting that caption-aware scoring is particularly effective at maintaining stable decision boundaries under evolving ID data. 
Our observations suggest that caption quality and linguistic diversity of the OOD source are significant yet largely unexplored factors in multimodal OOD detection. The proposed T-QPM provides a temporally-aware quadruple matching framework for multimodal OOD detection under continuously shifting distributions. By building on frozen CLIP backbones and introducing a caption-aware scoring mechanism, T-QPM jointly leverages visual and linguistic ID information to produce reliable OOD detection signals across temporal benchmarks, establishing a strong and principled baseline for caption-aware, temporally robust OOD detection in open-world multi-modal settings and motivating further investigation into multimodal-continual learning in dynamic environments.
{\bf Future work} includes extending T-QPM by an online caption-prompt mechanism integrated into a multi-agent system that closes the loop for OOD detection reasoning.

%% file: sec/Acknowledgement.tex
\section*{Acknowledgements}
This work has been partially supported by NSF CAREER CCF-2451457. The findings are those of the authors only and do not represent any position of these funding bodies.

%% file: sec/X_supp.tex

\input{sec/2_extensive-related_work}

\section{Theoretical Proofs}

\begin{lemma}
At time steps $t$ and $t{+}1$, if $H(p(y_t|x_t))\leq H(p(y_{t+1}|x_{t+1}))$ then
\begin{align*}Conf_{t}&=\max\limits_{y_t\in \mathcal{Y}_t} p(y_t|x_t)\\
&\geq \max\limits_{y_{t+1}\in \mathcal{Y}_{t+1}} p(y_{t+1}|x_{t+1})=Conf_{t+1}.\end{align*}
\end{lemma}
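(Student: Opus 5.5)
The plan is to compare two predictive distributions, $p_t:=p(\cdot\mid x_t)$ and $p_{t+1}:=p(\cdot\mid x_{t+1})$, through a functional that links Shannon entropy to the largest probability mass. The starting point is the standard sandwich between Shannon entropy and min-entropy. For any distribution $q$ on a finite label set of size $K$,
\begin{equation*}
-\log \max_y q(y) \;\le\; H(q) \;\le\; h_b\bigl(\max_y q(y)\bigr) + \bigl(1-\max_y q(y)\bigr)\log(K-1),
\end{equation*}
where $h_b$ is the binary entropy. The upper bound is Fano's inequality. These bounds alone will not close the argument. The inequality $H(p_t)\le H(p_{t+1})$ only constrains each $\mathrm{Conf}$ to an interval, and these intervals can overlap.

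Before writing the proof I would record a caveat. As stated, the lemma cannot hold for arbitrary distributions. Take $p_t=(0.5,0.5,0)$ and $p_{t+1}=(0.6,0.2,0.2)$. Then $H(p_t)=\log 2<H(p_{t+1})$, yet $\mathrm{Conf}_t<\mathrm{Conf}_{t+1}$. The proof therefore has to invoke extra structure. The natural source in this paper is assumption \textbf{[A2]}: the predicted class distributions at the two time steps belong to the same parametric family and differ only through a bounded parameter shift.

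Concretely, I would restrict to the family that the model actually produces, namely temperature-scaled softmax (or a one-parameter family with the same property). Within such a family I would show that entropy and maximum confidence are both monotone in a single scalar parameter, in opposite directions. For the softmax family $q_\tau\propto \exp(z/\tau)$ with fixed logits $z$:
\begin{itemize}
\item $\tfrac{d}{d\tau}H(q_\tau)=\mathrm{Var}_{q_\tau}(z)/\tau^3\ge 0$;
\item $\max_y q_\tau(y)=1/\sum_j \exp\bigl((z_j-z_{\max})/\tau\bigr)$ is nonincreasing in $\tau$.
\end{itemize}
Hence $H(p_t)\le H(p_{t+1})$ forces $\tau_t\le\tau_{t+1}$, which gives $\mathrm{Conf}_t\ge\mathrm{Conf}_{t+1}$. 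The same argument works for any family indexed by a scalar in which the ordering of the probabilities is preserved and the distribution becomes more majorized as the parameter decreases. In that setting one can also appeal to Schur-concavity: if $p_{t+1}$ is majorized by $p_t$, then both $H(p_t)\le H(p_{t+1})$ and $\max p_t\ge \max p_{t+1}$ hold.

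The main obstacle is exactly this step: justifying that $p_t$ and $p_{t+1}$ are comparable in the majorization order, or lie on a common one-parameter curve. Without it the statement is false, as the example shows. I would first try to derive comparability from \textbf{[A1]}–\textbf{[A2]}, combining fixed total variation to the uniform distribution with a shared family. Failing that, I would state it explicitly as a hypothesis of the lemma and use it as above. The conclusion is also needed in expectation over $\mathbb{P}^{t,cov}_{out}$, as in \textbf{[A3]}, so the final step would apply the pointwise monotone relation inside the expectation.
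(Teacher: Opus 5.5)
Your counterexample is correct ($H(p_t)=\log 2<H(p_{t+1})\approx 0.95$ nats while $\max p_t=0.5<0.6=\max p_{t+1}$). So the lemma is false as literally stated, and proving it under an explicitly added comparability hypothesis is the right move.

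\textbf{How the paper argues.} It replaces each distribution by the one with the same maximum $p^*$ and the residual mass $1-p^*$ spread uniformly over the other $K-1$ labels. The entropy then becomes $g(p^*)=-p^*\log p^*-(1-p^*)\log\frac{1-p^*}{K-1}$, which is exactly the Fano term on the right of your sandwich. It then reads off $p^*_t\ge p^*_{t+1}$ from $g(p^*_t)\le g(p^*_{t+1})$, tacitly using $g'(p)=\log\frac{1-p}{(K-1)p}<0$ on $(1/K,1)$.

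\textbf{Where that argument breaks.} The step the paper calls an approximation is the problem. The hypothesis is about the true entropies, and since $H(p)\le g(\max p)$ can be strict, nothing transfers to the surrogates. Your example is exactly this failure: with $K=3$, $g(0.5)=\tfrac{3}{2}\log 2\approx 1.04>0.95=g(0.6)=H(p_{t+1})$ (nats). So the surrogates are ordered as the paper needs while the true entropies are not. In effect the paper proves the lemma only under the unstated assumption that both predictive distributions have uniform tails, i.e.\ lie on the curve $p^*\mapsto\bigl(p^*,\frac{1-p^*}{K-1},\ldots,\frac{1-p^*}{K-1}\bigr)$. That curve is a majorization chain, hence a special case of your hypothesis. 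Your route gives rigor and a broader class (temperature families, any majorization-comparable pair), at the price of stating the assumption openly. The paper's gives a one-line computation, at the price of not establishing the claim it states.

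\textbf{Points to tighten.}

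First, do not expect \textbf{[A1]}--\textbf{[A2]} to supply comparability. \textbf{[A2]} relates the model's prediction $F_f$ to the true conditional $p(y_t\mid x_t)$ at the \emph{same} time step, not $p_t$ to $p_{t+1}$. \textbf{[A1]} only makes $TV(p_t\|\mathcal{U})$ independent of $f$. Even equal distances to $\mathcal{U}$ do not force comparability, e.g.\ $(0.5,0.25,0.125,0.125)$ and $(0.4,0.35,0.25,0)$.

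Second, a fixed-logit temperature family is not what the model produces across time steps: the paper fixes $T$ and the logits change with the input. So it too is purely an added assumption, and the majorization form is the cleaner hypothesis to state.

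Third, with comparability you need two cases. If $p_{t+1}\prec p_t$, the conclusion is the first partial-sum inequality. If $p_t\prec p_{t+1}$, Schur-concavity together with the hypothesis forces $H(p_t)=H(p_{t+1})$. Strict Schur-concavity of entropy then makes $p_t$ a permutation of $p_{t+1}$, so the maxima coincide.

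Fourth, in the temperature argument, treat constant logits separately, so that $H$ is strictly increasing in $\tau$ otherwise.

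Finally, the expectation step you mention is not needed for the lemma, since the paper assumes \textbf{[A3]} outright. It would in any case require a coupling of $x_t$ with $x_{t+1}$.
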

{\bf Proof:} For $K$ classes at both time $t$ and $t{+}1$, denote $p^*_t:=\max_{y_t\in \mathcal{Y}_t} p(y_t|x_t)$ and $p^*_{t+1}:=\max_{y_{t+1}\in \mathcal{Y}_{t+1}} p(y_{t+1}|x_{t+1})$. Suppose $p^*_t=P(y_t=k_1|x_t)$ and $p^*_{t+1}=P(y_t=k_2|x_{t+1})$. Now set $p_t=(p^*_t, 1{-}p^*_t)$, where $1{-}p^*_t$ is split among $\{1,\ldots,K\}\setminus\{k_1\}$ and $1{-}p^*_{t+1}$ is split among $\{1,\ldots,K\}\setminus\{k_2\}$. This approximates the entropy as
\begin{equation}\label{eq:1}
H(p_t)=-p^*_t\log p^*_t-\sum\limits_{i\in \{1,\ldots,K\}\setminus\{k_1\}} p_{it}\log p_{it},
\end{equation}
where $p_{it}=\frac{1-p^*_t}{K-1}$. And (\ref{eq:1}) is simplified as
\begin{equation}\label{eq:2}
H(p_t)=-p^*_t\log p^*_t-(1-p^*_t)\log \frac{1-p^*_t}{K-1}.
\end{equation}
Equivalently
\begin{equation}\label{eq:3}
H(p_{t+1})=-p^*_{t+1}\log p^*_{t+1}-(1-p^*_{t+1})\log \frac{1-p^*_{t+1}}{K-1}.
\end{equation}
Because $H(p_t)\leq H(p_{t+1})$ and from (\ref{eq:2}) and (\ref{eq:3}), we implies that $p^*_t\geq p^*_{t+1}$.

\begin{lemma} (Theorem 1, (\cite{zhang2024best}))
The generalization error at time step $t$, $GErr_t$, is standard cross entropy loss for hypothesis $f\in\mathcal{F}$ under covariant shift $\mathbb{P}^{cov}$. $GErr_t$ is lower bounded by
\begin{align}
 GErr_t (f) \geq  -\frac{1}{2\kappa}\mathbb{E}_{\mathbb{P}_{out}^{t,sem}}\sqrt{\frac{1}{2}(\mathcal{L}_{reg}(f)-\log K)} \\\nonumber
 - \frac{1}{2\kappa} d_{\mathcal{F}}(\mathbb{P}_{out}^{t,cov}, \mathbb{P}_{out}^{t,sem})+C_t+\mathbb{E}_{\mathbb{P}_{out}^{t,cov}}H(p(y_t|x_t))
\end{align}
\end{lemma}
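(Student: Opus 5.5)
The plan is to follow the structure of Theorem 1 in \cite{zhang2024best}: split the cross-entropy into an entropy part and a KL part, then use a total-variation argument to trade the KL part for quantities involving the uniform distribution $\mathcal{U}$.

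\textbf{Step 1 (decomposition).} Write $GErr_t(f)=\mathbb{E}_{\mathbb{P}_{out}^{t,cov}}\bigl[\mathrm{CE}(p(y_t|x_t),f(x_t))\bigr]$. Pointwise we have $\mathrm{CE}(p,f)=H(p)+\mathrm{KL}(p\|f)$. This isolates the term $\mathbb{E}_{\mathbb{P}_{out}^{t,cov}}H(p(y_t|x_t))$ exactly as it appears in the statement. What remains is to lower bound $\mathbb{E}_{\mathbb{P}_{out}^{t,cov}}\mathrm{KL}(p\|f)$.

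\textbf{Step 2 (from KL to distance from uniform).}
\begin{itemize}
\item By Pinsker, $\mathrm{KL}(p\|f)\geq 2\,TV(p,f)^2$.
\item By the triangle inequality, $TV(p,f)\geq TV(p,\mathcal{U})-TV(f,\mathcal{U})$.
\item Under [A1], $TV(p(y_t|x_t),\mathcal{U})=:a_t$ is a constant.
\item Expand the square and drop the nonnegative term, using $(a-b)^2\geq a^2-2ab$. This gives
\[
\mathrm{KL}(p\|f)\geq 2a_t^2-4a_t\,TV(f,\mathcal{U}).
\]
\item The constant $2a_t^2$ is absorbed into $C_t$, and the coefficient $4a_t$ is what becomes $\tfrac{1}{2\kappa}$.
\end{itemize}
If the sign condition $a_t\geq TV(f,\mathcal{U})$ is needed, I would bypass it with the same inequality $(a-b)^2\geq a^2-2ab$, which holds unconditionally.

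\textbf{Step 3 (transfer from covariate to semantic OOD).} We now have a lower bound involving $-\tfrac{1}{2\kappa}\mathbb{E}_{\mathbb{P}_{out}^{t,cov}}TV(f,\mathcal{U})$. The function $x\mapsto TV(f(x),\mathcal{U})$ is induced by a hypothesis in $\mathcal{F}$. By the definition of the disparity discrepancy $d_{\mathcal{F}}$,
\[
\mathbb{E}_{\mathbb{P}_{out}^{t,cov}}TV(f,\mathcal{U})\leq \mathbb{E}_{\mathbb{P}_{out}^{t,sem}}TV(f,\mathcal{U})+d_{\mathcal{F}}(\mathbb{P}_{out}^{t,cov},\mathbb{P}_{out}^{t,sem}).
\]
This produces the $-\tfrac{1}{2\kappa}d_{\mathcal{F}}$ term.

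\textbf{Step 4 (link to the OOD loss).} Finally, $\mathcal{L}_{reg}(f)=\mathrm{CE}(\mathcal{U},f)=\log K+\mathrm{KL}(\mathcal{U}\|f)$. Applying Pinsker again gives
\[
TV(f,\mathcal{U})\leq\sqrt{\tfrac12(\mathcal{L}_{reg}(f)-\log K)}.
\]
Substituting this into the semantic expectation yields the first term of the bound.

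\textbf{Main obstacle.} I expect the delicate step to be Step 2. Converting the quadratic Pinsker bound into a bound that is linear in $TV(f,\mathcal{U})$, with a clean coefficient $\tfrac{1}{2\kappa}$, requires choosing the expansion carefully. It also requires checking that $d_{\mathcal{F}}$ is defined over a class rich enough to contain $TV(f(\cdot),\mathcal{U})$. I would first try the unconditional expansion $(a-b)^2\geq a^2-2ab$ and define $\kappa$ from $a_t$ accordingly.
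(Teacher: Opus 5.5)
The paper does not prove this lemma. It imports Theorem~1 of \cite{zhang2024best} and appends the $f$-independent term $\mathbb{E}_{\mathbb{P}_{out}^{t,cov}}H(p(y_t|x_t))$, leaving $\kappa$ and $C_t$ unspecified. Your argument is therefore a self-contained derivation, consistent with the paper's scaffolding. Steps~3--4 are exactly Lemma~\ref{lemma2} and Lemma~\ref{lemma3}. Steps~1--2 mirror the decomposition $\mathrm{CE}=H+\mathrm{KL}$ and the triangle inequality through $\mathcal{U}$ from the paper's upper-bound lemma, with Pinsker in place of $\mathrm{KL}\le\frac{\log e}{2}(TV+\mathcal{X}^2)$. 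What your route buys is explicit constants: $\frac{1}{2\kappa}=4a_t$ and $C_t=2a_t^2-4a_t\lambda$, with $\lambda$ from Lemma~\ref{lemma2}. That is where all the content lies. Since $GErr_t\ge 0$, $\mathcal{L}_{reg}\ge\log K$ and $d_{\mathcal F}\ge 0$, the displayed inequality holds trivially for every $\kappa>0$ once one takes $C_t:=-\mathbb{E}_{\mathbb{P}_{out}^{t,cov}}H(p(y_t|x_t))$. You also do not need {\bf [A1]}. Write $a(x)=TV(p(y_t|x_t),\mathcal{U})$ and $b(x)=TV(f(x),\mathcal{U})$. Pointwise, $a(x)\le 1$ and $b(x)\ge 0$ give $2a^2-4ab\ge 2a^2-4b$. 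Hence $\kappa=1/8$ works for every $t$, with $C_t=2\,\mathbb{E}_{\mathbb{P}_{out}^{t,cov}}[a(x)^2]-4\lambda$.

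Two small repairs are needed.

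\emph{Step~2.} Squaring $TV(p,f)\ge a-b$ is not legitimate when $a<b$, and the inequality $(a-b)^2\ge a^2-2ab$ does not address that sign issue. Either use the reverse triangle inequality $TV(p,f)\ge|a-b|$, which holds because $TV$ is a metric. Or observe that $2a^2-4ab\le 0\le\mathrm{KL}$ whenever $b\ge a$.

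\emph{Step~3.} The transfer without an additive constant presumes the constant uniform predictor lies in $\mathcal{F}$. In general you incur the constant $\lambda$ of Lemma~\ref{lemma2}. This is harmless: $-\frac{1}{2\kappa}\lambda$ is independent of $f$ and is absorbed into $C_t$.
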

where $C_t$ is constant.

\begin{lemma}\label{lemma2} (Lemma~1, (\cite{zhang2024best}))
For any $f \in \mathcal{F}$, we have
\begin{align}
\mathbb{E}_{\mathbb{P}_{out}^{t,cov}} TV(F_f\|\mathcal{U})\leq &\mathbb{E}_{\mathbb{P}_{out}^{t,sem}} TV(F_f\|\mathcal{U})\\\nonumber
&+ d_{\mathcal{F}}(\mathbb{P}_{out}^{t,cov}, \mathbb{P}_{out}^{t,sem})+\lambda,
\end{align}
where $\lambda$ is a constant independent of $f$. $\mathcal{U}$ is the $K$-classes uniform distribution. $\mathbb{P}_{out}^{t,cov}$ is the covariate-shifted OOD distribution at time $t$. $\mathbb{P}_{out}^{t,sem}$ is the semantic OOD distribution at time $t$.
\end{lemma}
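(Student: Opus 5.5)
The plan is a Ben-David-style triangle-inequality argument, in which the uniform distribution $\mathcal{U}$ plays the role of the labeling function. Since the paper never defines $d_{\mathcal{F}}$ explicitly, I will fix it as the TV-based disparity discrepancy
\begin{equation*}
d_{\mathcal{F}}(\mathbb{P},\mathbb{Q}) := \sup_{f,f'\in\mathcal{F}} \Bigl| \mathbb{E}_{\mathbb{P}} TV(F_f\|F_{f'}) - \mathbb{E}_{\mathbb{Q}} TV(F_f\|F_{f'}) \Bigr|,
\end{equation*}
where $F_f(x)$ is the predicted class distribution of $f$ at $x$. I will also introduce an ``ideal joint hypothesis''
\begin{equation*}
f^* \in \arg\min_{g\in\mathcal{F}} \Bigl( \mathbb{E}_{\mathbb{P}_{out}^{t,cov}} TV(F_g\|\mathcal{U}) + \mathbb{E}_{\mathbb{P}_{out}^{t,sem}} TV(F_g\|\mathcal{U}) \Bigr),
\end{equation*}
and set $\lambda$ equal to this minimal value. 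By construction $\lambda$ does not depend on $f$, which is what the statement requires. If the minimum is not attained, I take an $\epsilon$-minimizer and let $\epsilon\to 0$.

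The key steps are as follows.

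First, TV is a metric on distributions over the $K$ classes. So, pointwise in $x$,
\begin{equation*}
TV(F_f\|\mathcal{U}) \le TV(F_f\|F_{f^*}) + TV(F_{f^*}\|\mathcal{U}).
\end{equation*}
Taking $\mathbb{E}_{\mathbb{P}_{out}^{t,cov}}$ gives a bound by $\mathbb{E}_{cov} TV(F_f\|F_{f^*}) + \mathbb{E}_{cov} TV(F_{f^*}\|\mathcal{U})$.

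Second, since $f,f^*\in\mathcal{F}$, the definition of $d_{\mathcal{F}}$ gives
\begin{equation*}
\mathbb{E}_{cov} TV(F_f\|F_{f^*}) \le \mathbb{E}_{sem} TV(F_f\|F_{f^*}) + d_{\mathcal{F}}(\mathbb{P}_{out}^{t,cov},\mathbb{P}_{out}^{t,sem}).
\end{equation*}

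Third, apply the triangle inequality again under $\mathbb{P}_{out}^{t,sem}$:
\begin{equation*}
TV(F_f\|F_{f^*}) \le TV(F_f\|\mathcal{U}) + TV(\mathcal{U}\|F_{f^*}).
\end{equation*}
Since TV is symmetric, the last term equals $TV(F_{f^*}\|\mathcal{U})$. Chaining the three inequalities, the two $f^*$ terms combine exactly into $\lambda$, which yields the claimed inequality.

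The main obstacle is definitional rather than computational. The inequality is only meaningful, and only true, for a discrepancy that takes a supremum over \emph{pairs} of hypotheses in $\mathcal{F}$, with $TV$ as the comparison. If instead $d_{\mathcal{F}}$ were meant as $\sup_f |\mathbb{E}_{\mathbb{P}} TV(F_f\|\mathcal{U}) - \mathbb{E}_{\mathbb{Q}} TV(F_f\|\mathcal{U})|$, the result would follow immediately with $\lambda=0$.

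I would therefore first check which convention \cite{zhang2024best} uses and state it explicitly. I would then verify two technical points:
\begin{itemize}
\item measurability and integrability of $x\mapsto TV(F_f(x)\|F_{f'}(x))$, which is immediate because TV lies in $[0,1]$;
\item that $f^*$ (or an $\epsilon$-minimizer) lies in $\mathcal{F}$, so that the supremum in $d_{\mathcal{F}}$ applies to the pair $(f,f^*)$.
\end{itemize}
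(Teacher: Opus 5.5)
Your argument is correct: with $d_{\mathcal{F}}$ taken as the TV-based disparity discrepancy over pairs in $\mathcal{F}$, and $\lambda$ the ideal joint error $\min_{g\in\mathcal{F}}\bigl(\mathbb{E}_{\mathbb{P}_{out}^{t,cov}}TV(F_g\|\mathcal{U})+\mathbb{E}_{\mathbb{P}_{out}^{t,sem}}TV(F_g\|\mathcal{U})\bigr)$, the two TV triangle inequalities and the discrepancy step chain exactly to the stated bound. The paper gives no proof of its own and simply imports Lemma~1 of \cite{zhang2024best}, which rests on this same Ben-David-style ideal-joint-hypothesis triangle-inequality argument, so your reconstruction follows the intended route; the definition of $d_{\mathcal{F}}$, which you rightly fix explicitly, is the only genuine ambiguity.
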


\begin{lemma}\label{lemma3} (Lemma~3, (\cite{zhang2024best}))
Denote the OOD detection loss used for MSP detectors as $\mathcal{L}_{reg}$, then we have
\begin{equation}
\mathbb{E}_{\mathbb{P}_{out}^{t,sem}}  \left(TV(F_f\|\mathcal{U})\right)\leq \mathbb{E}_{\mathbb{P}_{out}^{t,sem}}   \sqrt{\frac{1}{2}(\mathcal{L}_{reg}(f)-\log K)}.
\end{equation}
\end{lemma}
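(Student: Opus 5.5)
The plan is to prove the inequality pointwise, for each fixed input $x$, and then take the expectation over $\mathbb{P}_{out}^{t,sem}$. Expectation is monotone, and both sides are nonnegative measurable functions of $x$, so a pointwise bound $TV(F_f(x)\|\mathcal{U})\leq\sqrt{\tfrac12(\mathcal{L}_{reg}(f)(x)-\log K)}$ suffices. Here $F_f(x)$ is the predicted class distribution of $f_\theta$ on $x$. The argument is information-theoretic and has two ingredients: an exact decomposition of the MSP regularizer $\mathcal{L}_{reg}$, and Pinsker's inequality.

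First I rewrite $\mathcal{L}_{reg}$. As stated in the Main Theorem, $\mathcal{L}_{reg}$ is the cross-entropy between the uniform distribution $\mathcal{U}$ on $K$ classes and the prediction $F_f(x)$:
\begin{equation*}
\mathcal{L}_{reg}(f)(x) = -\sum_{j=1}^{K}\frac{1}{K}\log F_{f,j}(x) = H(\mathcal{U}) + \mathrm{KL}(\mathcal{U}\,\|\,F_f(x)) = \log K + \mathrm{KL}(\mathcal{U}\,\|\,F_f(x)).
\end{equation*}
It follows that $\mathcal{L}_{reg}(f)(x)-\log K=\mathrm{KL}(\mathcal{U}\|F_f(x))\geq 0$, so the square root in the statement is well defined.

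Next I apply Pinsker's inequality, $TV(P,Q)\leq\sqrt{\tfrac12\mathrm{KL}(P\|Q)}$, with $P=\mathcal{U}$ and $Q=F_f(x)$. Total variation is symmetric, so $TV(F_f(x)\|\mathcal{U})=TV(\mathcal{U},F_f(x))$. Substituting the identity from the previous step gives the pointwise bound, and integrating against $\mathbb{P}_{out}^{t,sem}$ finishes the proof.

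The only delicate point is the logarithm base and the normalization of $TV$. Pinsker's inequality in the form above requires natural logarithms and $TV=\sup_A|P(A)-Q(A)|$, which equals half the $\ell_1$ distance. I will fix these conventions explicitly. If $\mathcal{L}_{reg}$ is measured in another base, a factor such as $\log e$ appears; this is consistent with the $\overline{\delta}_t^2=\tfrac{\log e}{2}\delta_t^2$ used in the Main Theorem. I will also note that when $F_f(x)$ has a zero entry, the KL divergence is infinite and the bound holds trivially. This covers the measure-zero and degenerate cases, so no additional argument is needed for them.
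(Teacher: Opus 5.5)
Your proof is correct and follows the standard argument behind the cited Lemma~3 of \cite{zhang2024best}; the paper itself gives no proof and only cites it. That argument writes the MSP regularizer as $\mathcal{L}_{reg}(f)=\log K+\mathrm{KL}(\mathcal{U}\,\|\,F_f)$, applies Pinsker's inequality pointwise, and then integrates over $\mathbb{P}_{out}^{t,sem}$, exactly as you do, and your conventions (natural log, $TV$ as half the $\ell_1$ distance) are the right ones to get the constant $\tfrac12$.
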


\begin{lemma} The generalization error at time step $t$, $GErr_t$, is standard cross entropy loss for hypothesis $f\in\mathcal{F}$ under covariant shift $\mathbb{P}^{cov}$. $GErr_t$ is upper bounded by
\begin{align}
  GErr_{t} (f)& \leq \frac{\log e}{2} \mathbb{E}_{\mathbb{P}_{out}^{t,sem}} \sqrt{\frac{1}{2}(\mathcal{L}_{reg}(f)-\log K)} \nonumber\\
  &\quad + \frac{\log e}{2}  d_{\mathcal{F}}(\mathbb{P}_{out}^{t,cov}, \mathbb{P}_{out}^{t,sem})\\
  &+ C_t+ \frac{\log e}{2}\mathbb{E}_{\mathbb{P}_{out}^{t,cov}}\left(\mathcal{X}^2 (p(y_{t}|x_{t})\|F_f(x_{t}))\right) \nonumber\\
  &\quad +H(p(y_{t}|x_{t}))
\end{align}
where $C_t$ is constant.
\end{lemma}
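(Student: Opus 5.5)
We follow the route of Theorem~1 in \cite{zhang2024best}, but with an upper rather than a lower bound. The plan is to decompose the cross-entropy generalization error into an entropy term plus a divergence term. We then bound the divergence by a $\chi^2$ term and a total-variation term, and finally feed the TV term through Lemma~\ref{lemma2} and Lemma~\ref{lemma3}.

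\textbf{Decomposition.} Write $GErr_t(f)=\mathbb{E}_{\mathbb{P}_{out}^{t,cov}}\bigl[-\sum_y p(y_t|x_t)\log F_f^y(x_t)\bigr]$. For each $x_t$ this splits exactly as
\[
H(p(y_t|x_t))+\mathrm{KL}\bigl(p(y_t|x_t)\,\|\,F_f(x_t)\bigr).
\]
The entropy term gives the $H(p(y_t|x_t))$ summand, which we keep in expectation form.

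\textbf{Bounding the KL term.} We use the standard refinement relating KL to $\chi^2$ and TV. The target is an inequality of the shape
\[
\mathrm{KL}(p\|q)\le \tfrac{\log e}{2}\bigl(\chi^2(p\|q)+TV(q\|\mathcal{U})\bigr)+\text{const}.
\]
One candidate route goes as follows. Start from $\mathrm{KL}\le \log(1+\chi^2)\le \log e\cdot\chi^2$. Split the factor $\log e$ symmetrically, and then use the triangle inequality through the uniform distribution: $TV(p\|q)\le TV(p\|\mathcal{U})+TV(q\|\mathcal{U})$. By \textbf{[A1]}, $TV(p(y_t|x_t)\|\mathcal{U})$ is constant and is absorbed into $C_t$. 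This leaves $\frac{\log e}{2}\mathbb{E}_{\mathbb{P}_{out}^{t,cov}}\chi^2+\frac{\log e}{2}\mathbb{E}_{\mathbb{P}_{out}^{t,cov}}TV(F_f\|\mathcal{U})+C_t$.

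\textbf{Chaining.} Apply Lemma~\ref{lemma2} to replace $\mathbb{E}_{\mathbb{P}_{out}^{t,cov}}TV(F_f\|\mathcal{U})$ by the semantic-OOD expectation plus $d_{\mathcal{F}}(\mathbb{P}_{out}^{t,cov},\mathbb{P}_{out}^{t,sem})+\lambda$, folding $\lambda$ into $C_t$. Then apply Lemma~\ref{lemma3} to bound the semantic TV by $\mathbb{E}_{\mathbb{P}_{out}^{t,sem}}\sqrt{\tfrac12(\mathcal{L}_{reg}(f)-\log K)}$. Collecting the terms with the coefficient $\frac{\log e}{2}$ yields exactly the stated bound.

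\textbf{Main obstacle.} The hard step is the KL-versus-$(\chi^2, TV)$ inequality with precisely the coefficient $\frac{\log e}{2}$ on both pieces. The bare inequality $\mathrm{KL}\le\log(1+\chi^2)$ gives a $\chi^2$ term only, so the TV term must be introduced by a loose but valid addition. We would first try two routes to this:
\begin{itemize}
\item bounding $\mathrm{KL}\le \frac{\log e}{2}\chi^2+\frac{\log e}{2}\chi^2$ and controlling one copy through $\chi^2\ge 4TV^2$ in the wrong direction, which will not work directly;
\item more realistically, writing $\mathrm{KL}\le \log e\,\chi^2$, keeping half, and upper bounding nothing further, then adding the nonnegative TV, $d_{\mathcal{F}}$ and $\sqrt{\cdot}$ terms, which is trivially valid since each is $\ge 0$.
\end{itemize}
The second route makes the lemma true but weak. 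If it suffices for the subsequent use with \textbf{[A2]}, where $\chi^2$ between same-family distributions with parameter gap $\delta_t$ is approximated by $\delta_t^2 I_F(\theta)$, we will adopt it, and note that the constant factor $\tfrac{\log e}{2}$ on $\chi^2$ should then be checked against $\log e$ and adjusted into $C_t$ or $\overline{\delta}_t^2$.
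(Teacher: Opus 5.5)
Your skeleton matches the paper's proof: entropy-plus-KL decomposition, triangle inequality through $\mathcal{U}$, \textbf{[A1]} and $\lambda$ absorbed into $C_t$, then Lemma~\ref{lemma2} and Lemma~\ref{lemma3}. The gap is the one step that fixes the coefficients, and neither of your fallbacks closes it.

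The paper uses the Nishiyama--Sason bound
\[
\mathrm{KL}(P\|Q)\le \frac{\log e}{2}\bigl(TV(P\|Q)+\chi^2(P\|Q)\bigr),
\]
and this inequality is what produces the term $TV(p(y_t|x_t)\|F_f(x_t))$ that the triangle inequality then acts on. Your ``candidate route'' starts from $\mathrm{KL}\le\log(1+\chi^2)\le\log e\,\chi^2$. That bound contains no TV term, so the triangle inequality has nothing to apply to. Your second route, ``keeping half'' of $\log e\,\chi^2$, is not a valid step: $\mathrm{KL}\le\log e\,\chi^2$ does not imply $\mathrm{KL}\le\frac{\log e}{2}\chi^2$, and adding nonnegative terms afterwards does not repair this. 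What that route actually proves carries coefficient $\log e$ on $\mathbb{E}_{\mathbb{P}_{out}^{t,cov}}\chi^2(p(y_t|x_t)\|F_f(x_t))$. Because this quantity depends on $f$, the extra factor cannot be absorbed into the constant $C_t$. Pushing it into $\overline{\delta}_t^2$ instead changes both this lemma and the Main Theorem.

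The missing inequality has a short pointwise proof. Take $r=p/q$ and natural logarithms.
For $r\ge 1$, $r\ln r\le (r-1)+\frac{(r-1)^2}{2}$: the difference vanishes at $r=1$ and has derivative $r-1-\ln r\ge 0$.
For $0\le r<1$, $r\ln r\le (r-1)+\frac{(r-1)^2}{2}+\frac{1-r}{2}$: the difference equals $r\bigl(-\ln r-\frac{1-r}{2}\bigr)$, which is nonnegative because $-\ln r\ge 1-r$.
Multiply by $q$ and sum. The linear term vanishes, and the quadratic term gives $\frac{1}{2}\chi^2(P\|Q)$. The last term gives $\sum q(1-r)_+=\sum(q-p)_+=TV(P\|Q)$, with $TV=\frac12\|P-Q\|_1$. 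Together this is the displayed bound in nats.

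This also shows why a TV term must appear. Halving the $\chi^2$ coefficient relative to $\mathrm{KL}\le\log e\,\chi^2$ is paid for exactly on the region where $p<q$. With this inequality in place of your KL step, the rest of your chain goes through verbatim. It yields the stated bound, with the entropy term in expectation as in the last line of the paper's proof.
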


{\bf Proof:}
\begin{align}
  GErr_{t} (f)&:=\mathbb{E}_{\mathbb{P}_{out}^{t,cov}}\mathcal{L}_{CE}(f(x_{t},y_{t}))\nonumber\\
  & =\mathbb{E}_{\mathbb{P}_{out}^{t,cov}} KL(p(y_{t}|x_{t})\| F_f(x_{t}))+H(p(y_{t}|x_{t}))\nonumber\\
  & \leq \frac{\log e}{2}\mathbb{E}_{\mathbb{P}_{out}^{t,cov}}  (TV(p(y_{t}|x_{t})\|F_f(x_{t})) \nonumber\\
  &\quad +\mathcal{X}^2 (p(y_{t}|x_{t})\|F_f(x_{t}))) +H(p(y_{t}|x_{t}))\nonumber\\
&\leq \frac{\log e}{2}\mathbb{E}_{\mathbb{P}_{out}^{t,cov}}  \left(TV(p(y_{t}|x_{t})\|\mathcal{U})\right.\nonumber\\
&\qquad\left.+ TV(F_f(x_{t})\|\mathcal{U})\right)\nonumber\\
&+\mathbb{E}_{\mathbb{P}_{out}^{t,cov}} \left(\mathcal{X}^2 (p(y_{t}|x_{t})\|F_f(x_{t}))\right) \nonumber\\
&\quad +\mathbb{E}_{\mathbb{P}_{out}^{t,cov}}H(p(y_{t}|x_{t}))
\end{align}
where from ~\cite{sason2016f}, we have
$$
\mathcal{X}^2 (P\|Q)+1=\int \frac{P^2}{Q} d\mu
$$
and from~\cite{nishiyama2020relations} we have
$$
KL(P\|Q)\leq \frac{1}{2} \left(TV(P\|Q) +\mathcal{X}^2 (P\|Q)\right)\log e
$$
From Lemma~\ref{lemma2} above we have
\begin{align}
  GErr_{t} (f)&\leq \frac{\log e}{2}\mathbb{E}_{\mathbb{P}_{out}^{t,cov}}  \left(TV(p(y_{t}|x_{t})\|\mathcal{U})\right) \nonumber\\
 &\quad + \frac{\log e}{2} \mathbb{E}_{\mathbb{P}_{out}^{t,sem}} TV(F_f\|\mathcal{U})\nonumber\\
 & +\frac{\log e}{2}  d_{\mathcal{F}}(\mathbb{P}_{out}^{t,cov}, \mathbb{P}_{out}^{t,sem})\\
 & +\frac{\log e}{2}\lambda + \frac{\log e}{2}\mathbb{E}_{\mathbb{P}_{out}^{t,cov}}\left(\mathcal{X}^2 (p(y_{t}|x_{t})\|F_f(x_{t}))\right) \nonumber\\
 &\quad + \mathbb{E}_{\mathbb{P}_{out}^{t,cov}}H(p(y_{t}|x_{t}))
  \end{align}
From Lemma~\ref{lemma3} above we have
\begin{align}
  GErr_{t} (f)&\leq \frac{\log e}{2}\mathbb{E}_{\mathbb{P}_{out}^{t,cov}}  \left(TV(p(y_{t}|x_{t})\|\mathcal{U})\right) \nonumber\\
 &\quad + \frac{\log e}{2} \mathbb{E}_{\mathbb{P}_{out}^{t,sem}}   \sqrt{\frac{1}{2}(\mathcal{L}_{reg}(f)-\log K)} \nonumber \\
 &+ \frac{\log e}{2}  d_{\mathcal{F}}(\mathbb{P}_{out}^{t,cov}, \mathbb{P}_{out}^{t,sem})+\frac{\log e}{2}\lambda \\
 &+ \frac{\log e}{2}\mathbb{E}_{\mathbb{P}_{out}^{t,cov}}\left(\mathcal{X}^2 (p(y_{t}|x_{t})\|F_f(x_{t}))\right) \nonumber\\
 &\quad + \mathbb{E}_{\mathbb{P}_{out}^{t,cov}}H(p(y_{t}|x_{t}))
  \end{align}
since at each time $t$, $\mathbb{E}_{\mathbb{P}_{out}^{t,cov}}  \left(TV(p(y_{t}|x_{t})\|\mathcal{U})\right)$ is constant, we upper bound $GErr_{t} (f)$ as
\begin{align}
  GErr_{t} (f)& \leq \frac{\log e}{2} \mathbb{E}_{\mathbb{P}_{out}^{t,sem}} \sqrt{\frac{1}{2}(\mathcal{L}_{reg}(f)-\log K)} \nonumber\\
  &\quad + \frac{\log e}{2}  d_{\mathcal{F}}(\mathbb{P}_{out}^{t,cov}, \mathbb{P}_{out}^{t,sem})\\
  &+ C_t+ \frac{\log e}{2}\mathbb{E}_{\mathbb{P}_{out}^{t,cov}}\left(\mathcal{X}^2 (p(y_{t}|x_{t})\|F_f(x_{t}))\right) \nonumber\\
  &\quad +\mathbb{E}_{\mathbb{P}_{out}^{t,cov}}H(p(y_{t}|x_{t}))
  \end{align}

\begin{lemma}\label{lemma4} Under the assumption ${\bf [A2]}$ and regularity condition on $F^{\theta_1}_f$, we have
\begin{equation}
\mathbb{E}_{\mathbb{P}_{out}^{t,cov}}\!\left(\mathcal{X}^2 (p(y_{t}|x_{t})\|F_f(x_{t}))\right) \leq \delta_t^2 \;\mathbb{E}_{\mathbb{P}_{out}^{t,cov}}\!\left(I_F(\theta_2)\right) + B_t,
\end{equation}
where $I_F(\theta_2)$ is Fisher information and $B_t$ is constant.
The key part of this conjecture is developed based on
\begin{align}
\mathbb{E}_{\mathbb{P}_{out}^{t,cov}}\!\left(\mathcal{X}^2 (p(y_{t}|x_{t})\|F_f(x_{t}))\right) &= (\theta_1{-}\theta_2)^2  \mathbb{E}_{\mathbb{P}_{out}^{t,cov}}\!\left(I_F(\theta_2)\right) \nonumber\\
&\quad +o(\theta_1{-}\theta_2)^2,
\end{align}
where $\theta_1$ approximately vanishes.
\end{lemma}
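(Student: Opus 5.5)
The plan is a second-order expansion of the $\chi^2$-divergence in the parameter gap. Assumption {\bf [A2]} makes the gap $\theta_1-\theta_2=\delta_t$ bounded. We then take expectation over $\mathbb{P}_{out}^{t,cov}$ and put every remainder into the constant $B_t$.

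\textbf{Step 1: pointwise expansion.} Fix $x_t$ and write $p_\theta(y):=p^{\theta}(y_t|x_t)$. By {\bf [A2]}, $p(y_t|x_t)=p_{\theta_2}$ and $F_f(x_t)=F_f^{\theta_1}=p_{\theta_1}$ lie in the same parametric family. The divergence can then be written as
\begin{equation*}
\mathcal{X}^2(p_{\theta_2}\|p_{\theta_1})=\sum_{y}\frac{(p_{\theta_2}(y)-p_{\theta_1}(y))^2}{p_{\theta_1}(y)} .
\end{equation*}
We Taylor-expand $p_{\theta_1}=p_{\theta_2}+(\theta_1-\theta_2)\partial_\theta p_{\theta_2}+R$. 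The remainder $R=O(\delta_t^2)$ is controlled by the regularity condition: $p_\theta$ is $C^2$ in $\theta$ with bounded second derivative, and $p_\theta$ is bounded away from zero on the relevant parameter set.

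\textbf{Step 2: identify the leading term.} Substituting, the leading term is
\begin{equation*}
(\theta_1-\theta_2)^2\sum_y\frac{(\partial_\theta p_{\theta_2}(y))^2}{p_{\theta_2}(y)}=\delta_t^2\, I_F(\theta_2),
\end{equation*}
which is the standard fact that $\chi^2$ is locally Fisher information. Replacing the denominator $p_{\theta_1}$ by $p_{\theta_2}$ costs an extra factor $1+O(\delta_t)$. All remaining contributions are cubic or higher in $\delta_t$, so they form a pointwise remainder $r_t(x_t)$.

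\textbf{Step 3: bound the remainder and take expectation.} We bound $r_t(x_t)$ uniformly. Because $\delta_t$ is bounded, $|r_t(x_t)|\le c\,|\delta_t|^3\sup_\theta \|\partial^2_\theta p_\theta\|^2/\inf p_\theta$, with $\theta$ ranging over the segment between $\theta_1$ and $\theta_2$. The right-hand side does not depend on $f$, so we set $B_t:=\mathbb{E}_{\mathbb{P}_{out}^{t,cov}}\sup|r_t|$. Taking expectations of the pointwise inequality $\mathcal{X}^2\le \delta_t^2 I_F(\theta_2)+|r_t|$ gives the claim, with the $o(\delta_t^2)$ term absorbed into $B_t$.

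\textbf{Main obstacle.} The hardest step is Step 3: making the remainder uniform in $x_t$. This needs the denominators $p_{\theta_1}(y)$ to stay bounded away from zero, and the derivative bounds to hold uniformly over the covariate-shifted support. I would state these as part of the ``regularity condition'' and verify them for the softmax parameterization used in T-QPM. Softmax probabilities with temperature $T$ and bounded cosine logits are bounded below by $e^{-2/T}/K$, and their $\theta$-derivatives are bounded, which makes $B_t$ finite.
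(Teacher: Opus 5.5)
Your proposal is correct and follows the same route as the paper. The paper's argument is only the local expansion $\mathcal{X}^2(p(y_t|x_t)\|F_f(x_t)) = (\theta_1-\theta_2)^2 I_F(\theta_2) + o((\theta_1-\theta_2)^2)$ with the remainder absorbed into $B_t$, and you add the uniform remainder control (probabilities bounded below, bounded $\theta$-derivatives) that the paper leaves implicit.

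One small correction to your displayed bound on $r_t$. It omits the denominator-correction term, of order $|\delta_t|^3\sup|\partial_\theta p_\theta|^3/\inf p_\theta^2$. Also, the cubic cross term is $|\delta_t|^3\sup|\partial_\theta p_\theta|\,\sup|\partial_\theta^2 p_\theta|/\inf p_\theta$; the squared second derivative only enters at order $\delta_t^4$. Since any finite constant suffices for $B_t$, this does not affect your conclusion.
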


Because inverse of entropy can be used as a confidence score to gauge the likelihood of a prediction being correct, we assume:\\
{\bf [A3]} There exist a constant (say $Z_t$), such that
\begin{align}
\mathbb{E}_{\mathbb{P}_{out}^{t+1,cov}}H(p(y_{t+1}|x_{t+1}))&-\mathbb{E}_{\mathbb{P}_{out}^{t,cov}}H(p(y_{t}|x_{t})) \nonumber\\
&\geq Z_t+ Conf_{t}- Conf_{t+1}
\end{align}

  \begin{theorem} {\bf (Main Theorem)} Let $\mathbb{P}^{t,cov}$ and $\mathbb{P}^{t,sem}$ be the covariate-shifted OOD and semantic OOD distributions. Denote $GErr_{t+1}(f)$ the generalization error at time $t+1$.
  Then at two time steps $t$ and $t+1$ and under assumptions {\bf [A1]}, {\bf [A2]}, and {\bf [A3]}, we have
  \begin{align}\label{thm1:main-eq}
  GErr_{t+1}(f)-GErr_{t}(f) &\geq - \tilde{\kappa}\; \Delta_{t\rightarrow t+1}^{cov,sem} - \tilde{\kappa}\; \Xi_{t\rightarrow t+1}^{sem} \nonumber\\
  &\quad - \overline{\delta}_t^2 \;\mathbb{E}_{\mathbb{P}_{out}^{t,cov}}\left(I_F(\theta_2)\right) \nonumber\\
 &+C_{t\rightarrow t+1} + Conf_{t} - Conf_{t+1},
  \end{align}
  where
  \begin{align*}\Delta_{t\rightarrow t+1}^{cov,sem}&:=d_{\mathcal{F}}(\mathbb{P}_{out}^{t+1, cov}, \mathbb{P}_{out}^{t+1,sem})\\
  &\quad + d_{\mathcal{F}}(\mathbb{P}_{out}^{t, cov}, \mathbb{P}_{out}^{t,sem})\end{align*}
  and
  \begin{align*}
  \Xi_{t\rightarrow t+1}^{sem}&:= \mathbb{E}_{\mathbb{P}_{out}^{t+1,sem}}\sqrt{\frac{1}{2}(\mathcal{L}_{reg}(f)-\log K)}\\
  &\quad + \mathbb{E}_{\mathbb{P}_{out}^{t,sem}} \sqrt{\frac{1}{2}(\mathcal{L}_{reg}(f)-\log K)}.
  \end{align*}
  And $C_{t\rightarrow t+1}=C_{t+1}-C_t+B_t+Z_t$ and $\delta_t$ are constants and $\overline{\delta}_t^2=\frac{\log e}{2}\delta_t^2$.

  {\bf Proof:} Recall the definition of $GErr_{t}(f)$:
  \begin{align}\label{thm1:eq1}
  GErr_{t+1}(f)-GErr_{t}(f) \nonumber \\
  \geq -\frac{1}{2\kappa}\mathbb{E}_{\mathbb{P}_{out}^{t+1,sem}}\sqrt{\frac{1}{2}(\mathcal{L}_{reg}(f)-\log K)} \nonumber\\
  \quad - \frac{1}{2\kappa} d_{\mathcal{F}}(\mathbb{P}_{out}^{t+1, cov}, \mathbb{P}_{out}^{t+1,sem})\nonumber\\
  -\frac{\log e}{2} \mathbb{E}_{\mathbb{P}_{out}^{t,sem}} \sqrt{\frac{1}{2}(\mathcal{L}_{reg}(f)-\log K)} \nonumber\\
  \quad -\frac{\log e}{2}  d_{\mathcal{F}}(\mathbb{P}_{out}^{t,cov}, \mathbb{P}_{out}^{t,sem}) \nonumber\\
   - \frac{\log e}{2}\mathbb{E}_{\mathbb{P}_{out}^{t,cov}}\left(\mathcal{X}^2 (p(y_{t}|x_{t})\|F_f(x_{t}))\right)\nonumber \\
 +( C_{t+1}-C_t) \nonumber\\
 + (\mathbb{E}_{\mathbb{P}_{out}^{t+1,cov}}H(p(y_{t+1}|x_{t+1}))-\mathbb{E}_{\mathbb{P}_{out}^{t,cov}}H(p(y_{t}|x_{t}))),
  \end{align}\label{SM:eq:delta}
  If we denote
  \begin{align}\Delta_{t\rightarrow t+1}^{cov,sem}&:=d_{\mathcal{F}}(\mathbb{P}_{out}^{t+1, cov}, \mathbb{P}_{out}^{t+1,sem})\nonumber\\
  &\quad + d_{\mathcal{F}}(\mathbb{P}_{out}^{t, cov}, \mathbb{P}_{out}^{t,sem})\end{align}
  and
  \begin{align}\label{SM:eq:Xi}
  \Xi_{t\rightarrow t+1}^{sem}&:= \mathbb{E}_{\mathbb{P}_{out}^{t+1,sem}}\sqrt{\frac{1}{2}(\mathcal{L}_{reg}(f)-\log K)}\nonumber\\
  &\quad + \mathbb{E}_{\mathbb{P}_{out}^{t,sem}} \sqrt{\frac{1}{2}(\mathcal{L}_{reg}(f)-\log K)},
  \end{align}
  then there exists a constant $\tilde{\kappa}\leq \frac{1}{2\kappa}+\frac{\log e}{2}$ such that (\ref{thm1:eq1}) is written as
  {\small
  \begin{align}\label{thm1:eq2}
  GErr_{t+1}(f)-GErr_{t}(f) &\geq - \tilde{\kappa}\; \Delta_{t\rightarrow t+1}^{cov,sem} - \tilde{\kappa}\; \Xi_{t\rightarrow t+1}^{sem} \nonumber\\
  &\quad +C_{t\rightarrow t+1}\nonumber\\
  & - \frac{\log e}{2}\mathbb{E}_{\mathbb{P}_{out}^{t,cov}}\left(\mathcal{X}^2 (p(y_{t}|x_{t})\|F_f(x_{t}))\right)\nonumber \\
 &+\mathbb{E}_{\mathbb{P}_{out}^{t+1,cov}}\!H(p(y_{t+1}|x_{t+1})) \nonumber\\
 &\qquad-\mathbb{E}_{\mathbb{P}_{out}^{t,cov}}H(p(y_{t}|x_{t})),
  \end{align}
  } 
  where $C_{t\rightarrow t+1}=C_{t+1}-C_t$ is constant. Apply the upper bound in Lemma~\ref{lemma4}, we have the lower bound below
  {\small
  \begin{align}\label{thm1:eq3}
  GErr_{t+1}(f)-GErr_{t}(f) &\geq - \tilde{\kappa}\; \Delta_{t\rightarrow t+1}^{cov,sem} - \tilde{\kappa}\; \Xi_{t\rightarrow t+1}^{sem} \nonumber\\
  &\quad - \overline{\delta}_t^2 \;\mathbb{E}_{\mathbb{P}_{out}^{t,cov}}\left(I_F(\theta_2)\right) \nonumber\\
 &+C_{t\rightarrow t+1} \nonumber\\
 &\quad + \mathbb{E}_{\mathbb{P}_{out}^{t+1,cov}}H(p(y_{t+1}|x_{t+1})) \nonumber\\
 &\quad -\mathbb{E}_{\mathbb{P}_{out}^{t,cov}}H(p(y_{t}|x_{t})),
  \end{align}
  } 
  where $C_{t\rightarrow t+1}=C_{t+1}-C_t+B_t$ is constant and $\overline{\delta}_t^2=\frac{\log e}{2}\delta_t^2$. By applying assumption {\bf [A3]}, we conclude the proof.
  \end{theorem}

\section{Pseudocode (Phase I-IV)}




\begin{algorithm}[t]
\caption{Phase I: Text Pattern Construction}
\label{alg:phase1}
\footnotesize
\begin{algorithmic}
\Statex \textbf{Input:}\enspace
  ID class set $\mathcal{Y} = \{y_1, \ldots, y_K\}$;\enspace
  prompt templates $\{p_k^{(i)}\}_{i=1}^{P}$ for each class $k$;\enspace
  frozen CLIP text encoder $\phi^T$
\Statex \textbf{Output:}\enspace
  ID text bank $\mathbf{T}^{\mathrm{ID}} \in \mathbb{R}^{K \times d}$
\end{algorithmic}
\vspace{4pt}

\begin{tcolorbox}[
  enhanced,
  colback=violet!6!white,
  colframe=violet!60!blue,
  colbacktitle=violet!60!blue,
  coltitle=white,
  boxrule=0.8pt, arc=3pt, boxsep=0pt,
  left=4pt, right=4pt, top=3pt, bottom=3pt,
  toptitle=2pt, bottomtitle=2pt,
  title={\footnotesize\bfseries Prompt Ensembling per Class
         \textnormal{(computed once; frozen thereafter)}},
  fonttitle=\bfseries]
\begin{algorithmic}
\For{each class $k = 1, \ldots, K$}
  \For{each prompt template $i = 1, \ldots, P$}
    \State $\mathbf{e}_k^{(i)} \leftarrow
        \mathrm{Normalize}\!\left(\phi^T\!\left(p_k^{(i)}\right)\right)$
        \Comment{encode and $\ell_2$-normalize}
  \EndFor
  \State $\mathbf{t}_k \leftarrow
      \mathrm{Normalize}\!\!\left(\sum_{i=1}^{P} \mathbf{e}_k^{(i)}\right)$
      \Comment{average, then re-normalize}
\EndFor
\end{algorithmic}
\end{tcolorbox}

\vspace{3pt}

\begin{tcolorbox}[
  enhanced,
  colback=teal!5!white,
  colframe=teal!60!black,
  colbacktitle=teal!60!black,
  coltitle=white,
  boxrule=0.8pt, arc=3pt, boxsep=0pt,
  left=4pt, right=4pt, top=3pt, bottom=3pt,
  toptitle=2pt, bottomtitle=2pt,
  title={\footnotesize\bfseries ID Text Bank Assembly},
  fonttitle=\bfseries]
\begin{algorithmic}
\State $\mathbf{T}^{\mathrm{ID}} \leftarrow
    \bigl[\mathbf{t}_1,\, \ldots,\, \mathbf{t}_K\bigr]^\top
    \in \mathbb{R}^{K \times d}$
    \Comment{stack class embeddings into text bank}
\State \Return $\mathbf{T}^{\mathrm{ID}}$
\end{algorithmic}
\end{tcolorbox}

\vspace{3pt}
{\footnotesize\itshape
\textbf{Note:} $\mathbf{T}^{\mathrm{ID}}$ is computed once and remains fixed
across all timesteps, serving as a stable semantic anchor.}
\end{algorithm}

\begin{algorithm}[t]
\caption{Phase II: Temporal Visual Pattern Construction}
\label{alg:phase2}
\footnotesize

\begin{algorithmic}
\Statex \textbf{Input:}\enspace
  per-timestep ID sets $\{\mathcal{D}_t^{\mathrm{train}}\}_{t=0}^{\mathcal{T}}$;\enspace
  frozen CLIP visual encoder $\phi^V$;\enspace
  ID text bank $\mathbf{T}^{\mathrm{ID}}$;\enspace
  $\gamma > 0$;\enspace $T > 0$
\Statex \textbf{Output:}\enspace
  timestep-specific visual prototypes
  $\{\boldsymbol{\mu}_{k,t}\}_{k=1,\,t=0}^{K,\,\mathcal{T}}$,
  each $\in \mathbb{R}^K$
\end{algorithmic}

\vspace{4pt}

\noindent\textbf{For each} $t = 0, \ldots, \mathcal{T}$:

\begin{tcolorbox}[
  enhanced,
  colback=orange!6!white,
  colframe=orange!70!red!80,
  colbacktitle=orange!70!red!80,
  coltitle=white,
  boxrule=0.8pt, arc=3pt, boxsep=0pt,
  left=4pt, right=4pt, top=3pt, bottom=3pt,
  toptitle=2pt, bottomtitle=2pt,
  title={\footnotesize\bfseries Class-Attended Image Representations
         \textnormal{(for each $(x,y)\in\mathcal{D}_t^{\mathrm{train}}$)}},
  fonttitle=\bfseries]

\begin{algorithmic}
\State $\mathbf{F}(x) \leftarrow \phi^V(x) \in \mathbb{R}^{(N+1)\times d}$
    \Comment{extract ViT patch sequence}
\State $\mathbf{F}_v(x) \leftarrow \mathbf{F}(x)[0,\,:]$
    \Comment{global \texttt{[CLS]} token}
\State $\mathbf{F}_s(x) \leftarrow \mathbf{F}(x)[1{:},\,:]$
    \Comment{spatial patch embeddings}
\end{algorithmic}

\vspace{3pt}

\begin{tcolorbox}[
  enhanced,
  colback=violet!6!white,
  colframe=violet!60!blue,
  colbacktitle=violet!60!blue,
  coltitle=white,
  boxrule=0.7pt, arc=3pt, boxsep=0pt,
  left=4pt, right=4pt, top=3pt, bottom=3pt,
  toptitle=2pt, bottomtitle=2pt,
  title={\footnotesize\bfseries Per-Class Spatial Attention
         \textnormal{(for each class $k = 1,\ldots,K$)}},
  fonttitle=\bfseries]

\begin{algorithmic}
\State $\mathbf{A}_k(x) \leftarrow
    \mathrm{Softmax}\!\left(
      \dfrac{\mathbf{F}_s(x)\,\mathbf{t}_k}
            {\|\mathbf{F}_s(x)\|\,\|\mathbf{t}_k\|}
    \right) \in \mathbb{R}^{N}$
    \Comment{class-specific spatial attention}
\State $\tilde{\mathbf{f}}_k(x) \leftarrow \mathbf{A}_k(x)^\top \mathbf{F}_s(x)$
    \Comment{attended spatial feature}
\State $\mathbf{f}_k(x) \leftarrow
    \gamma\,\tilde{\mathbf{f}}_k(x) + \mathbf{F}_v(x)$
    \Comment{fuse global and spatial}
\State $\mathbf{z}_{\mathrm{ID}}(x)[k] \leftarrow
    \mathbf{f}_k(x)^\top \mathbf{t}_k$
    \Comment{ID logit for class $k$}
\end{algorithmic}

\end{tcolorbox}

\vspace{3pt}

\begin{algorithmic}
\State $\mathbf{p}(x) \leftarrow
    \mathrm{Softmax}\!\left(\mathbf{z}_{\mathrm{ID}}(x)/T\right)$
    \Comment{class probability vector}
\end{algorithmic}

\end{tcolorbox}

\vspace{3pt}

\begin{tcolorbox}[
  enhanced,
  colback=teal!5!white,
  colframe=teal!60!black,
  colbacktitle=teal!60!black,
  coltitle=white,
  boxrule=0.8pt, arc=3pt, boxsep=0pt,
  left=4pt, right=4pt, top=3pt, bottom=3pt,
  toptitle=2pt, bottomtitle=2pt,
  title={\footnotesize\bfseries Prototype Estimation
         \textnormal{(for each class $k = 1,\ldots,K$)}},
  fonttitle=\bfseries]

\begin{algorithmic}
\State $\boldsymbol{\mu}_{k,t} \leftarrow
    \dfrac{1}{\lvert\{(x,y)\in\mathcal{D}_t^{\mathrm{train}}:y=k\}\rvert}
    \sum_{\substack{(x,y)\in\mathcal{D}_t^{\mathrm{train}}\\ y=k}}
    \mathbf{p}(x)$
    \Comment{mean ID probability pattern at time $t$}
\end{algorithmic}

\end{tcolorbox}

\vspace{3pt}

\begin{algorithmic}
\State \Return $\{\boldsymbol{\mu}_{k,t}\}$
\end{algorithmic}

\vspace{3pt}

{\footnotesize\itshape \textbf{Note:} Prototypes are recomputed at each
timestep to track gradual visual distribution drift.}

\end{algorithm}

\begin{algorithm}[t]
\caption{Phase III: Quadruple Cross-Modal Scoring}
\label{alg:phase3}
\footnotesize
\begin{algorithmic}
\Statex \textbf{Input:}\enspace
  test pair $(x, c)$;\enspace timestep $t$;\enspace
  frozen $\phi^V$, $\phi^T$;\enspace
  $\mathbf{T}^{\mathrm{ID}}$;\enspace
  $\{\boldsymbol{\mu}_{k,t}\}$;\enspace
  $\gamma$;\enspace $T$;\enspace
  $\gamma_{\mathrm{cap}}>0$;\enspace
  $\tilde{\beta},\tilde{\eta}\in\mathbb{R}$
\Statex \textbf{Output:}\enspace
  fused OOD score $S_{\mathrm{FUSED}}(x,t)\in\mathbb{R}$
\end{algorithmic}
\vspace{4pt}

\begin{tcolorbox}[
  enhanced,
  colback=orange!6!white,
  colframe=orange!70!red!80,
  colbacktitle=orange!70!red!80,
  coltitle=white,
  boxrule=0.8pt, arc=3pt, boxsep=0pt,
  left=4pt, right=4pt, top=3pt, bottom=3pt,
  toptitle=2pt, bottomtitle=2pt,
  title={\footnotesize\bfseries (a)\enspace Image Feature Extraction
         \textnormal{(reuse Phase II procedure)}},
  fonttitle=\bfseries]
\begin{algorithmic}
\State $\mathbf{F}(x)\!\leftarrow\!\phi^V(x)$;\enspace
       $\mathbf{F}_v(x)\!\leftarrow\!\mathbf{F}(x)[0,:]$;\enspace
       $\mathbf{F}_s(x)\!\leftarrow\!\mathbf{F}(x)[1:,:]$
\For{$k = 1,\ldots,K$}
  \State compute $\mathbf{A}_k,\,\tilde{\mathbf{f}}_k,\,
         \mathbf{f}_k,\,\mathbf{z}_{\mathrm{ID}}(x)[k]$
         as in Phase~II
\EndFor
\State $\mathbf{p}(x)\leftarrow\mathrm{Softmax}
       \!\left(\mathbf{z}_{\mathrm{ID}}(x)/T\right)$
\end{algorithmic}
\end{tcolorbox}

\vspace{3pt}

\begin{tcolorbox}[
  enhanced,
  colback=violet!6!white,
  colframe=violet!60!blue,
  colbacktitle=violet!60!blue,
  coltitle=white,
  boxrule=0.8pt, arc=3pt, boxsep=0pt,
  left=4pt, right=4pt, top=3pt, bottom=3pt,
  toptitle=2pt, bottomtitle=2pt,
  title={\footnotesize\bfseries (b)\,--\,(c)\enspace
         Image-Side Scores
         \textnormal{(test image $\leftrightarrow$ ID representations)}},
  fonttitle=\bfseries]
\begin{algorithmic}
\State $S_{\mathrm{ID}}(x)\leftarrow
       \max_{k}\;\mathbf{z}_{\mathrm{ID}}(x)[k]/T$
       \Comment{Score 1: semantic match to ID text}
\For{$k=1,\ldots,K$}
  \State $\mathrm{KL}_k(x,t)\leftarrow
         \sum_{j=1}^{K}p_j(x)\log\dfrac{p_j(x)}{\mu_{k,t}[j]}$
\EndFor
\State $S_{\mathrm{VIS}}(x,t)\leftarrow -\min_{k}\;\mathrm{KL}_k(x,t)$
       \Comment{Score 2: visual typicality vs.\ ID prototypes}
\end{algorithmic}
\end{tcolorbox}

\vspace{3pt}

\begin{tcolorbox}[
  enhanced,
  colback=teal!5!white,
  colframe=teal!60!black,
  colbacktitle=teal!60!black,
  coltitle=white,
  boxrule=0.8pt, arc=3pt, boxsep=0pt,
  left=4pt, right=4pt, top=3pt, bottom=3pt,
  toptitle=2pt, bottomtitle=2pt,
  title={\footnotesize\bfseries (d)\,--\,(f)\enspace
         Caption-Side Scores
         \textnormal{(OOD text $\leftrightarrow$ ID representations)}},
  fonttitle=\bfseries]
\begin{algorithmic}
\State $\mathbf{q}_c\leftarrow\mathrm{Normalize}(\phi^T(c))\in\mathbb{R}^d$
       \Comment{encode caption once per test sample}
\State $S_{\mathrm{CAP\text{-}T}}(x)\leftarrow
       \max_{k}\;\langle\mathbf{q}_c,\mathbf{t}_k\rangle$
       \Comment{Score 3: caption–text alignment}
\State $\mathbf{p}_{\mathrm{CAP}}(x)\leftarrow
       \mathrm{Softmax}\!\left([\mathbf{q}_c^\top\mathbf{t}_k]_{k=1}^K / T\right)$
\For{$k=1,\ldots,K$}
  \State $\mathrm{KL}_k^{\mathrm{cap}}(x,t)\leftarrow
         \mathrm{KL}\!\left(\mathbf{p}_{\mathrm{CAP}}(x)
         \,\|\,\boldsymbol{\mu}_{k,t}\right)$
\EndFor
\State $S_{\mathrm{CAP\text{-}V}}(x,t)\leftarrow
       -\min_{k}\;\mathrm{KL}_k^{\mathrm{cap}}(x,t)$
       \Comment{Score 4: caption–visual typicality}
\end{algorithmic}
\end{tcolorbox}

\vspace{3pt}

\begin{tcolorbox}[
  enhanced,
  colback=green!6!white,
  colframe=green!45!black,
  colbacktitle=green!45!black,
  coltitle=white,
  boxrule=0.8pt, arc=3pt, boxsep=0pt,
  left=4pt, right=4pt, top=3pt, bottom=3pt,
  toptitle=2pt, bottomtitle=2pt,
  title={\footnotesize\bfseries (g)\enspace Score Fusion},
  fonttitle=\bfseries]
\begin{algorithmic}
\State $\beta\leftarrow\log(1+e^{\tilde{\beta}})$;\quad
       $\eta\leftarrow\log(1+e^{\tilde{\eta}})$
       \Comment{softplus ensures $\beta,\eta>0$}
\State $S_{\mathrm{FUSED}}(x,t)\leftarrow
       S_{\mathrm{ID}}(x)
       +\beta\cdot S_{\mathrm{VIS}}(x,t)
       -\gamma_{\mathrm{cap}}\cdot S_{\mathrm{CAP\text{-}T}}(x)
       -\eta\cdot S_{\mathrm{CAP\text{-}V}}(x,t)$
\end{algorithmic}
\end{tcolorbox}

\vspace{3pt}
\begin{algorithmic}
\State \Return $S_{\mathrm{FUSED}}(x,t)$
\end{algorithmic}
\end{algorithm}

\begin{algorithm}[t]
\caption{Phase IV: Threshold Calibration, Training, and Temporal OOD Detection}
\label{alg:phase4}
\footnotesize
\begin{algorithmic}
\Statex \textbf{Input:}\enspace
  $\{\mathcal{D}_t^{\mathrm{train}}\}_{t=0}^{\mathcal{T}}$;\enspace
  $S_{\mathrm{FUSED}}(\cdot,t)$ from Phase~III;\enspace
  quantile $\delta_q$;\enspace
  $\lambda_{\mathrm{cov}},\lambda_{\mathrm{temp}}>0$;\enspace
  $\kappa>0$;\enspace $E$;\enspace $\alpha_{\mathrm{lr}}$;\enspace
  $\gamma_{\mathrm{cap}}>0$;\enspace test pairs $\{(x,c)\}$
\Statex \textbf{Output:}\enspace
  threshold $\delta$;\enspace
  optimized $\tilde{\beta},\tilde{\eta}$;\enspace
  OOD decisions $D(x,t)\in\{\texttt{ID},\texttt{OOD}\}$
\end{algorithmic}
\vspace{4pt}

\begin{tcolorbox}[
  enhanced,
  colback=cyan!8!white,
  colframe=cyan!50!blue,
  colbacktitle=cyan!50!blue,
  coltitle=white,
  boxrule=0.8pt, arc=3pt, boxsep=0pt,
  left=4pt, right=4pt, top=3pt, bottom=3pt,
  toptitle=2pt, bottomtitle=2pt,
  title={\footnotesize\bfseries Step 1\enspace Threshold Calibration
         \textnormal{(done once at $t=0$)}},
  fonttitle=\bfseries]
\begin{algorithmic}
\State $\mathcal{S}_0 \leftarrow
       \{S_{\mathrm{FUSED}}(x,0)\mid(x,y)\in\mathcal{D}_0^{\mathrm{train}}\}$
\State $\delta \leftarrow \mathrm{quantile}_{\delta_q}(\mathcal{S}_0)$
       \Comment{fixed across all timesteps}
\State init $\tilde{\beta},\tilde{\eta}\in\mathbb{R}$;\enspace
       Optimizer $\leftarrow \mathrm{Adam}([\tilde{\beta},\tilde{\eta}],\,
       \mathrm{lr}{=}\alpha_{\mathrm{lr}})$
\end{algorithmic}
\end{tcolorbox}

\vspace{3pt}

\begin{tcolorbox}[
  enhanced,
  colback=orange!6!white,
  colframe=orange!70!red!80,
  colbacktitle=orange!70!red!80,
  coltitle=white,
  boxrule=0.8pt, arc=3pt, boxsep=0pt,
  left=4pt, right=4pt, top=3pt, bottom=3pt,
  toptitle=2pt, bottomtitle=2pt,
  title={\footnotesize\bfseries Step 2\enspace Sequential Training
         \textnormal{(for $t=0,\ldots,\mathcal{T}$;\enspace
         epoch $e=1,\ldots,E$;\enspace
         mini-batch $(X,\tilde{X},Y)\sim\mathcal{D}_t^{\mathrm{train}}$)}},
  fonttitle=\bfseries]

\vspace{2pt}
\begin{tcolorbox}[
  enhanced,
  colback=violet!6!white,
  colframe=violet!60!blue,
  colbacktitle=violet!60!blue,
  coltitle=white,
  boxrule=0.7pt, arc=3pt, boxsep=0pt,
  left=4pt, right=4pt, top=3pt, bottom=3pt,
  toptitle=2pt, bottomtitle=2pt,
  title={\footnotesize\bfseries (a)\enspace Balanced ID Classification Loss},
  fonttitle=\bfseries]
\begin{algorithmic}
\State $\mathcal{L}_{\mathrm{ID}}\leftarrow
       \tfrac{1}{2}\!\left(
         \mathrm{CE}(\mathbf{z}_{\mathrm{ID}}(X),Y)
         +\mathrm{CE}(\mathbf{z}_{\mathrm{ID}}(\tilde{X}),Y)
       \right)$
       \Comment{clean + covariate-shifted views}
\end{algorithmic}
\end{tcolorbox}
\vspace{3pt}

\begin{tcolorbox}[
  enhanced,
  colback=teal!5!white,
  colframe=teal!60!black,
  colbacktitle=teal!60!black,
  coltitle=white,
  boxrule=0.7pt, arc=3pt, boxsep=0pt,
  left=4pt, right=4pt, top=3pt, bottom=3pt,
  toptitle=2pt, bottomtitle=2pt,
  title={\footnotesize\bfseries (b)\enspace Covariate Consistency Loss},
  fonttitle=\bfseries]
\begin{algorithmic}
\State $\mathcal{L}_{\mathrm{COV}}\leftarrow
       \tfrac{1}{|X|}\sum_i
       \bigl|S_{\mathrm{FUSED}}(x_i,t)
       -S_{\mathrm{FUSED}}(\tilde{x}_i,t)\bigr|$
       \Comment{stable scores under corruption}
\end{algorithmic}
\end{tcolorbox}
\vspace{3pt}

\begin{tcolorbox}[
  enhanced,
  colback=green!6!white,
  colframe=green!45!black,
  colbacktitle=green!45!black,
  coltitle=white,
  boxrule=0.7pt, arc=3pt, boxsep=0pt,
  left=4pt, right=4pt, top=3pt, bottom=3pt,
  toptitle=2pt, bottomtitle=2pt,
  title={\footnotesize\bfseries (c)\enspace Temporal Drift Penalty (ATC)},
  fonttitle=\bfseries]
\begin{algorithmic}
\State $\mathrm{ATC}_t^{\mathrm{clean}}\leftarrow
       \tfrac{1}{|X|}\sum_i
       \sigma\!\left(\tfrac{\delta-S_{\mathrm{FUSED}}(x_i,t)}{\kappa}\right)$;\quad
       \textit{idem} for $\tilde{X}\to\mathrm{ATC}_t^{\mathrm{shift}}$
\If{$t>0$}
  \State $\mathcal{L}_{\mathrm{TEMP}}\leftarrow
         |\mathrm{ATC}_t^{\mathrm{clean}}-\mathrm{ATC}_{t-1}^{\mathrm{clean}}|
         +|\mathrm{ATC}_t^{\mathrm{shift}}-\mathrm{ATC}_{t-1}^{\mathrm{shift}}|$
\Else
  \State $\mathcal{L}_{\mathrm{TEMP}}\leftarrow 0$
\EndIf
\end{algorithmic}
\end{tcolorbox}
\vspace{3pt}

\begin{algorithmic}
\State $\mathcal{L}_{\mathrm{TOTAL}}\leftarrow
       \mathcal{L}_{\mathrm{ID}}
       +\lambda_{\mathrm{cov}}\mathcal{L}_{\mathrm{COV}}
       +\lambda_{\mathrm{temp}}\mathcal{L}_{\mathrm{TEMP}}$
\State backpropagate $\nabla_{\tilde{\beta},\tilde{\eta}}
       \mathcal{L}_{\mathrm{TOTAL}}$;\enspace
       update $(\tilde{\beta},\tilde{\eta})$ via Adam
\end{algorithmic}
\end{tcolorbox}

\vspace{3pt}

\begin{tcolorbox}[
  enhanced,
  colback=yellow!8!white,
  colframe=yellow!60!orange,
  colbacktitle=yellow!60!orange,
  coltitle=white,
  boxrule=0.8pt, arc=3pt, boxsep=0pt,
  left=4pt, right=4pt, top=3pt, bottom=3pt,
  toptitle=2pt, bottomtitle=2pt,
  title={\footnotesize\bfseries Step 3\enspace Inference
         \textnormal{--- OOD detection at timestep $t$}},
  fonttitle=\bfseries]
\begin{algorithmic}
\State compute $S_{\mathrm{FUSED}}(x,t)$ via Phase~III
       with optimized $\tilde{\beta},\tilde{\eta}$
\State $D(x,t)\leftarrow
       \begin{cases}
         \texttt{ID}  & \text{if }S_{\mathrm{FUSED}}(x,t)\geq\delta\\
         \texttt{OOD} & \text{otherwise}
       \end{cases}$
\end{algorithmic}
\end{tcolorbox}

\vspace{3pt}
\begin{algorithmic}
\State \Return $\delta,\;\tilde{\beta},\;\tilde{\eta},\;D(x,t)$
\end{algorithmic}
\end{algorithm}

\section{Additional Experiments}

We present additional quantitative results to further validate the effectiveness of T-QPM
across multiple ID datasets, OOD benchmarks, and corruption types over all ten timesteps
($t = 0, \ldots, 9$).

\paragraph{ID Classification Accuracy under Image Corruptions.}
Tables~\ref{tab:id_accuracy_blur} and~\ref{tab:id_accuracy_jpeg} report the ID
classification accuracy of T-QPM and DPM under Gaussian blur and JPEG compression
corruptions, respectively, alongside clean accuracy. Both methods achieve competitive
clean accuracy across CLEAR100, CLEAR10, and Core50. However, T-QPM consistently
outperforms DPM under both corruption types, with the performance gap widening at
later timesteps. For instance, on CLEAR100 under Gaussian blur, T-QPM achieves
$96.20\%$ at $t=9$ compared to $92.42\%$ for DPM, and under JPEG compression
reaches $99.15\%$ versus $93.20\%$. These results demonstrate that T-QPM
maintains greater robustness to input corruptions as the underlying visual distribution
drifts over time, owing to its covariate consistency loss and temporal drift penalty
introduced in Phase IV.

\paragraph{OOD Detection Performance.}
Tables~\ref{tab:fpr95_all} and~\ref{tab:auroc_all} report FPR95 and AUROC,
respectively, across all combinations of ID datasets (CLEAR100, CLEAR10, Core50) and
OOD benchmarks (COCO, ImageNet-1K-VL-Enriched, Flickr30K, CC12M, Visual Genome).
T-QPM consistently and substantially outperforms DPM on both metrics. In terms of
FPR95, T-QPM reduces the false positive rate by a factor of approximately $2$--$3\times$
across all settings. For example, on CLEAR100 with COCO as the OOD dataset at $t=0$,
DPM achieves $41.46\%$ FPR95 while T-QPM achieves $13.64\%$. On the AUROC metric,
T-QPM attains $97$--$99\%$ across most settings, compared to $85$--$97\%$ for DPM,
with the largest gains observed on semantically challenging OOD sets such as COCO and
Visual Genome. Importantly, while both methods experience performance degradation at
later timesteps due to temporal distribution shift, T-QPM degrades significantly more
slowly. This confirms that the temporal modeling components of T-QPM namely, the
time-conditioned fused score $S_{\mathrm{FUSED}}(\cdot, t)$ and the above-threshold
coverage penalty effectively mitigate temporal OOD drift, validating the
theoretical guarantees established in our Main Theorem.

\begin{table}[t]
\centering
\small
\caption{ID classification accuracy (clean and blur) for T-QPM vs.\ DPM across all ID datasets and timesteps under Gaussian Blur covariate shift}
\label{tab:id_accuracy_blur}
\setlength{\tabcolsep}{4pt}

\begin{tabular}{c l cc cc cc}
\toprule
& & \multicolumn{2}{c}{\textbf{CLEAR100}}
  & \multicolumn{2}{c}{\textbf{CLEAR10}}
  & \multicolumn{2}{c}{\textbf{Core50}} \\
\cmidrule(lr){3-4} \cmidrule(lr){5-6} \cmidrule(lr){7-8}
\textbf{Timestep} & \textbf{Method}
& \makecell{Clean\\(\%)$\uparrow$} & \makecell{Blur\\(\%)$\uparrow$}
& \makecell{Clean\\(\%)$\uparrow$} & \makecell{Blur\\(\%)$\uparrow$}
& \makecell{Clean\\(\%)$\uparrow$} & \makecell{Blur\\(\%)$\uparrow$} \\
\midrule

\multirow{2}{*}{$t=0$}
  & DPM   & 96.61 & 93.47 & 98.81 & 98.81 & 97.87 & 96.88 \\
  & T-QPM & 96.57 & 93.23 & 99.01 & 98.22 & 97.88 & 96.76 \\
\midrule
\multirow{2}{*}{$t=1$}
  & DPM   & 97.62 & 93.38 & 99.40 & 98.20 & 98.01 & 97.11 \\
  & T-QPM & 97.66 & 93.84 & 99.40 & 97.40 & 98.33 & 97.45 \\
\midrule
\multirow{2}{*}{$t=2$}
  & DPM   & 97.02 & 93.19 & 99.60 & 98.80 & 98.27 & 96.99 \\
  & T-QPM & 97.04 & 94.69 & 99.60 & 97.80 & 98.67 & 97.28 \\
\midrule
\multirow{2}{*}{$t=3$}
  & DPM   & 96.88 & 93.45 & 99.40 & 98.60 & 98.35 & 97.00 \\
  & T-QPM & 97.12 & 95.69 & 99.40 & 98.20 & 98.10 & 97.28 \\
\midrule
\multirow{2}{*}{$t=4$}
  & DPM   & 97.02 & 92.97 & 99.20 & 97.99 & 98.32 & 97.02 \\
  & T-QPM & 97.25 & 94.57 & 99.20 & 97.79 & 98.30 & 97.30 \\
\midrule
\multirow{2}{*}{$t=5$}
  & DPM   & 97.00 & 92.60 & 99.40 & 98.80 & 98.28 & 96.98 \\
  & T-QPM & 97.14 & 95.42 & 99.40 & 98.00 & 98.35 & 97.25 \\
\midrule
\multirow{2}{*}{$t=6$}
  & DPM   & 96.78 & 93.11 & 99.00 & 98.60 & 98.30 & 97.05 \\
  & T-QPM & 97.32 & 95.01 & 99.20 & 97.00 & 98.40 & 97.20 \\
\midrule
\multirow{2}{*}{$t=7$}
  & DPM   & 97.10 & 93.19 & 99.20 & 97.80 & 98.33 & 97.01 \\
  & T-QPM & 97.36 & 96.49 & 99.00 & 97.60 & 98.42 & 97.30 \\
\midrule
\multirow{2}{*}{$t=8$}
  & DPM   & 96.90 & 93.17 & 99.00 & 98.00 & 98.29 & 96.97 \\
  & T-QPM & 97.39 & 95.23 & 98.80 & 97.60 & 98.36 & 97.22 \\
\midrule
\multirow{2}{*}{$t=9$}
  & DPM   & 96.08 & 92.42 & 99.40 & 97.60 & 98.25 & 96.95 \\
  & T-QPM & 97.30 & 96.20 & 99.40 & 97.20 & 98.38 & 97.18 \\
\bottomrule
\end{tabular}%

\end{table}


\begin{table}[t]
\centering
\small

\caption{ID classification accuracy under JPEG compression corruption for T-QPM vs.\ DPM
across all ID datasets and timesteps. }
\label{tab:id_accuracy_jpeg}
\setlength{\tabcolsep}{4pt}

\begin{tabular}{c l cc cc cc}
\toprule
& & \multicolumn{2}{c}{\textbf{CLEAR100}}
  & \multicolumn{2}{c}{\textbf{CLEAR10}}
  & \multicolumn{2}{c}{\textbf{Core50}} \\
\cmidrule(lr){3-4} \cmidrule(lr){5-6} \cmidrule(lr){7-8}
\textbf{t} & \textbf{Method}
  & \makecell{Clean\\(\%)$\uparrow$} & \makecell{JPEG\\(\%)$\uparrow$}
  & \makecell{Clean\\(\%)$\uparrow$} & \makecell{JPEG\\(\%)$\uparrow$}
  & \makecell{Clean\\(\%)$\uparrow$} & \makecell{JPEG\\(\%)$\uparrow$} \\
\midrule
\multirow{2}{*}{$t=0$}
  & DPM   & 96.57 & 91.62 & 98.81 & 95.10 & 97.87 & 95.80 \\
  & T-QPM & 97.08 & 96.23 & 99.01 & 97.80 & 97.88 & 96.90 \\
\midrule
\multirow{2}{*}{$t=1$}
  & DPM   & 97.36 & 92.84 & 99.40 & 95.70 & 98.01 & 96.10 \\
  & T-QPM & 98.18 & 96.84 & 99.40 & 98.00 & 98.33 & 97.20 \\
\midrule
\multirow{2}{*}{$t=2$}
  & DPM   & 96.94 & 92.69 & 99.60 & 96.10 & 98.27 & 96.40 \\
  & T-QPM & 97.53 & 97.70 & 99.60 & 98.30 & 98.67 & 97.50 \\
\midrule
\multirow{2}{*}{$t=3$}
  & DPM   & 96.42 & 92.69 & 99.40 & 95.90 & 98.35 & 96.30 \\
  & T-QPM & 97.62 & 98.68 & 99.40 & 98.10 & 98.10 & 97.40 \\
\midrule
\multirow{2}{*}{$t=4$}
  & DPM   & 96.90 & 92.57 & 99.20 & 95.60 & 98.32 & 96.20 \\
  & T-QPM & 97.74 & 97.58 & 99.20 & 97.90 & 98.30 & 97.30 \\
\midrule
\multirow{2}{*}{$t=5$}
  & DPM   & 96.74 & 92.42 & 99.40 & 96.00 & 98.28 & 96.40 \\
  & T-QPM & 97.64 & 98.40 & 99.40 & 98.20 & 98.35 & 97.50 \\
\midrule
\multirow{2}{*}{$t=6$}
  & DPM   & 96.72 & 93.01 & 99.00 & 95.80 & 98.30 & 96.30 \\
  & T-QPM & 97.81 & 98.01 & 99.20 & 98.00 & 98.40 & 97.40 \\
\midrule
\multirow{2}{*}{$t=7$}
  & DPM   & 96.96 & 92.49 & 99.20 & 95.90 & 98.33 & 96.30 \\
  & T-QPM & 97.85 & 99.45 & 99.00 & 98.10 & 98.42 & 97.50 \\
\midrule
\multirow{2}{*}{$t=8$}
  & DPM   & 96.90 & 93.23 & 99.00 & 95.70 & 98.29 & 96.20 \\
  & T-QPM & 97.88 & 98.23 & 98.80 & 97.90 & 98.36 & 97.40 \\
\midrule
\multirow{2}{*}{$t=9$}
  & DPM   & 95.88 & 93.20 & 99.40 & 95.50 & 98.25 & 96.10 \\
  & T-QPM & 97.79 & 99.15 & 99.40 & 98.00 & 98.38 & 97.30 \\
\bottomrule
\end{tabular}

\end{table}

\begin{table*}[!t]
\centering
\caption{FPR95 (\%) $\downarrow$ for T-QPM vs.\ DPM across all available ID and OOD dataset combinations
across all timesteps.
IN-1K: ImageNet-1K-VL-Enriched; Flk30: Flickr30K; VG: Visual Genome.}
\label{tab:fpr95_all}
\resizebox{\textwidth}{!}{%
\begin{tabular}{c l ccccc ccccc ccccc}
\toprule
& & \multicolumn{5}{c}{\textbf{CLEAR100 (ID)}}
  & \multicolumn{5}{c}{\textbf{CLEAR10 (ID)}}
  & \multicolumn{5}{c}{\textbf{Core50 (ID)}} \\
\cmidrule(lr){3-7} \cmidrule(lr){8-12} \cmidrule(lr){13-17}
\textbf{t} & \textbf{Method}
  & \textbf{COCO} & \textbf{IN-1K} & \textbf{Flk30} & \textbf{CC12M} & \textbf{VG}
  & \textbf{COCO} & \textbf{IN-1K} & \textbf{Flk30} & \textbf{CC12M} & \textbf{VG}
  & \textbf{COCO} & \textbf{IN-1K} & \textbf{Flk30} & \textbf{CC12M} & \textbf{VG} \\
\midrule

\multirow{2}{*}{$t=0$}
  & DPM   & 41.46 & 17.95 & 22.98 & 10.15 & 44.30
          & 7.54 & 9.40 & 4.93 & 7.30 & 23.00
          & 16.50 & 11.30 & 14.00 & 10.60 & 23.40 \\
  & T-QPM & 13.64 &  3.96 &  5.92 &  1.54 & 12.85
          & 0.90 & 3.80 & 1.60 & 0.35 & 12.20
          & 6.10 & 4.00 & 5.00 & 2.70 & 10.20 \\[3pt]

\multirow{2}{*}{$t=1$}
  & DPM   & 41.28 & 17.20 & 21.89 &  9.85 & 44.15
          & 8.78 & 9.80 & 6.02 & 7.50 & 24.20
          & 17.00 & 11.80 & 15.00 & 11.00 & 23.80 \\
  & T-QPM & 16.58 &  5.46 &  7.23 &  2.15 & 15.37
          & 1.10 & 4.10 & 2.00 & 0.48 & 13.00
          & 6.70 & 4.60 & 5.70 & 3.00 & 11.90 \\[3pt]

\multirow{2}{*}{$t=2$}
  & DPM   & 41.48 & 17.54 & 21.60 & 10.21 & 44.40
          & 8.61 & 9.51 & 6.02 & 7.35 & 23.06
          & 16.40 & 11.20 & 14.10 & 10.30 & 22.80 \\
  & T-QPM & 17.42 &  5.94 &  7.87 &  2.53 & 16.11
          & 0.89 & 3.65 & 1.63 & 0.46 & 11.85
          & 6.20 & 4.10 & 5.10 & 2.70 & 10.40 \\[3pt]

\multirow{2}{*}{$t=3$}
  & DPM   & 41.58 & 18.41 & 22.58 & 10.20 & 44.06
          & 7.58 & 10.20 & 5.42 & 7.80 & 26.00
          & 18.50 & 12.80 & 16.20 & 12.00 & 27.50 \\
  & T-QPM & 16.30 &  5.03 &  6.92 &  2.27 & 15.10
          & 1.20 & 4.50 & 2.20 & 1.44 & 13.50
          & 7.80 & 5.30 & 6.40 & 3.50 & 14.20 \\[3pt]

\multirow{2}{*}{$t=4$}
  & DPM   & 43.50 & 18.62 & 23.77 & 10.74 & 46.49
          & 8.62 & 11.00 & 6.21 & 8.20 & 28.00
          & 20.10 & 14.00 & 18.50 & 13.50 & 30.00 \\
  & T-QPM & 18.32 &  6.39 &  8.18 &  2.94 & 17.18
          & 1.40 & 5.20 & 2.60 & 0.88 & 14.80
          & 8.90 & 6.20 & 7.40 & 4.20 & 15.50 \\[3pt]

\multirow{2}{*}{$t=5$}
  & DPM   & 43.90 & 19.97 & 23.87 & 11.01 & 46.98
          & 8.66 & 12.50 & 6.61 & 8.60 & 30.00
          & 22.00 & 15.50 & 20.80 & 15.00 & 32.00 \\
  & T-QPM & 16.84 &  5.35 &  7.20 &  2.63 & 15.82
          & 1.50 & 5.80 & 2.80 & 1.48 & 16.20
          & 10.00 & 7.20 & 8.50 & 5.00 & 17.00 \\[3pt]

\multirow{2}{*}{$t=6$}
  & DPM   & 43.32 & 19.26 & 24.85 & 11.70 & 46.06
          & 8.60 & 13.50 & 6.51 & 9.00 & 33.00
          & 24.00 & 16.50 & 22.00 & 16.50 & 34.50 \\
  & T-QPM & 19.38 &  6.70 &  8.57 &  3.16 & 18.06
          & 1.70 & 6.50 & 3.00 & 1.80 & 17.80
          & 11.20 & 8.50 & 9.60 & 6.00 & 18.50 \\[3pt]

\multirow{2}{*}{$t=7$}
  & DPM   & 45.48 & 20.81 & 25.54 & 12.11 & 48.77
          & 8.44 & 14.70 & 6.02 & 9.29 & 38.66
          & 24.10 & 17.30 & 21.50 & 15.20 & 34.60 \\
  & T-QPM & 19.70 &  7.19 &  8.33 &  3.33 & 18.55
          & 1.20 & 5.20 & 2.60 & 1.60 & 13.75
          & 9.80 & 6.90 & 8.20 & 5.10 & 16.90 \\[3pt]

\multirow{2}{*}{$t=8$}
  & DPM   & 46.68 & 22.36 & 26.13 & 12.46 & 50.11
          & 9.66 & 14.70 & 8.31 & 9.29 & 38.66
          & 24.10 & 17.30 & 21.50 & 15.20 & 34.60 \\
  & T-QPM & 20.50 &  7.16 &  8.86 &  3.61 & 19.28
          & 1.20 & 5.20 & 2.60 & 1.59 & 13.75
          & 9.80 & 6.90 & 8.20 & 5.10 & 16.90 \\[3pt]

\multirow{2}{*}{$t=9$}
  & DPM   & 47.66 & 23.02 & 27.32 & 13.15 & 50.89
          & 9.32 & 15.50 & 7.30 & 9.80 & 40.50
          & 26.50 & 18.80 & 23.00 & 17.00 & 36.80 \\
  & T-QPM & 20.50 &  7.71 &  9.06 &  3.77 & 19.45
          & 1.50 & 6.00 & 3.10 & 2.03 & 14.50
          & 11.00 & 7.80 & 9.20 & 6.30 & 18.80 \\[3pt]

\bottomrule
\end{tabular}%
}
\end{table*}

\begin{table*}[!t]
\centering
\caption{AUROC (\%) $\uparrow$ for T-QPM vs.\ DPM across all available ID and OOD dataset combinations
across all timesteps.
IN-1K: ImageNet-1K-VL-Enriched; Flk30: Flickr30K; VG: Visual Genome.}
\label{tab:auroc_all}
\resizebox{\textwidth}{!}{%
\begin{tabular}{c l ccccc ccccc ccccc}
\toprule
& & \multicolumn{5}{c}{\textbf{CLEAR100 (ID)}}
  & \multicolumn{5}{c}{\textbf{CLEAR10 (ID)}}
  & \multicolumn{5}{c}{\textbf{Core50 (ID)}} \\
\cmidrule(lr){3-7} \cmidrule(lr){8-12} \cmidrule(lr){13-17}
\textbf{t} & \textbf{Method}
  & \textbf{COCO} & \textbf{IN-1K} & \textbf{Flk30} & \textbf{CC12M} & \textbf{VG}
  & \textbf{COCO} & \textbf{IN-1K} & \textbf{Flk30} & \textbf{CC12M} & \textbf{VG}
  & \textbf{COCO} & \textbf{IN-1K} & \textbf{Flk30} & \textbf{CC12M} & \textbf{VG} \\
\midrule

\multirow{2}{*}{$t=0$}
  & DPM   & 88.33 & 95.64 & 94.25 & 97.72 & 87.59
          & 97.63 & 98.40 & 98.35 & 98.55 & 93.40
          & 95.70 & 97.60 & 96.10 & 97.70 & 93.90 \\
  & T-QPM & 97.37 & 99.03 & 98.70 & 99.61 & 97.52
          & 99.60 & 99.20 & 99.65 & 99.77 & 97.40
          & 98.80 & 99.10 & 99.05 & 99.55 & 97.50 \\[3pt]

\multirow{2}{*}{$t=1$}
  & DPM   & 87.99 & 95.74 & 94.41 & 97.80 & 87.30
          & 97.35 & 98.35 & 98.28 & 98.50 & 93.20
          & 95.60 & 97.50 & 96.00 & 97.60 & 93.70 \\
  & T-QPM & 96.74 & 98.78 & 98.42 & 99.49 & 96.92
          & 99.62 & 99.18 & 99.68 & 99.87 & 97.45
          & 98.75 & 99.08 & 99.02 & 99.52 & 97.55 \\[3pt]

\multirow{2}{*}{$t=2$}
  & DPM   & 88.14 & 95.65 & 94.16 & 97.73 & 87.38
          & 97.21 & 98.48 & 98.20 & 98.53 & 93.18
          & 95.60 & 97.80 & 96.10 & 97.80 & 94.10 \\
  & T-QPM & 96.56 & 98.66 & 98.21 & 99.41 & 96.73
          & 99.66 & 99.16 & 99.65 & 99.86 & 97.49
          & 98.90 & 99.10 & 99.00 & 99.50 & 97.60 \\[3pt]

\multirow{2}{*}{$t=3$}
  & DPM   & 87.84 & 95.37 & 94.19 & 97.62 & 87.05
          & 97.43 & 98.30 & 98.31 & 98.60 & 92.95
          & 95.40 & 97.30 & 95.90 & 97.65 & 93.60 \\
  & T-QPM & 96.67 & 98.83 & 98.44 & 99.49 & 96.87
          & 99.60 & 99.15 & 99.70 & 99.49 & 97.55
          & 98.85 & 99.05 & 99.10 & 99.48 & 97.40 \\[3pt]

\multirow{2}{*}{$t=4$}
  & DPM   & 87.28 & 95.35 & 93.78 & 97.53 & 86.42
          & 96.90 & 98.10 & 98.05 & 98.45 & 92.60
          & 95.00 & 97.00 & 95.50 & 97.30 & 93.00 \\
  & T-QPM & 96.29 & 98.52 & 98.09 & 99.30 & 96.48
          & 99.55 & 99.10 & 99.60 & 99.60 & 97.70
          & 98.70 & 99.00 & 99.00 & 99.40 & 97.20 \\[3pt]

\multirow{2}{*}{$t=5$}
  & DPM   & 86.87 & 94.99 & 93.59 & 97.45 & 86.01
          & 96.85 & 97.95 & 97.91 & 98.35 & 92.20
          & 94.80 & 96.90 & 95.20 & 97.10 & 92.60 \\
  & T-QPM & 96.31 & 98.63 & 98.20 & 99.35 & 96.52
          & 99.50 & 99.05 & 99.55 & 99.50 & 97.80
          & 98.60 & 98.95 & 98.95 & 99.35 & 97.00 \\[3pt]

\multirow{2}{*}{$t=6$}
  & DPM   & 86.50 & 95.21 & 93.25 & 97.39 & 85.53
          & 96.81 & 97.70 & 97.84 & 98.20 & 91.90
          & 94.50 & 96.70 & 95.00 & 96.90 & 92.30 \\
  & T-QPM & 96.09 & 98.59 & 98.09 & 99.33 & 96.32
          & 99.48 & 99.00 & 99.50 & 99.50 & 97.90
          & 98.50 & 98.90 & 98.90 & 99.20 & 96.80 \\[3pt]

\multirow{2}{*}{$t=7$}
  & DPM   & 85.70 & 94.87 & 93.04 & 97.22 & 84.82
          & 96.57 & 97.40 & 97.82 & 98.00 & 91.20
          & 94.00 & 96.30 & 94.60 & 96.60 & 91.50 \\
  & T-QPM & 95.90 & 98.48 & 98.08 & 99.25 & 96.15
          & 99.45 & 98.98 & 99.48 & 99.70 & 98.00
          & 98.40 & 98.85 & 98.70 & 99.10 & 96.30 \\[3pt]

\multirow{2}{*}{$t=8$}
  & DPM   & 85.51 & 94.44 & 92.94 & 97.11 & 84.69
          & 96.30 & 94.74 & 97.56 & 98.57 & 90.52
          & 93.20 & 95.40 & 93.80 & 95.60 & 89.80 \\
  & T-QPM & 95.70 & 98.47 & 97.96 & 99.22 & 95.95
          & 99.49 & 98.95 & 99.07 & 99.57 & 98.12
          & 97.60 & 98.70 & 98.10 & 99.00 & 95.80 \\[3pt]

\multirow{2}{*}{$t=9$}
  & DPM   & 84.82 & 94.23 & 92.43 & 96.92 & 83.83
          & 96.59 & 94.60 & 97.72 & 98.40 & 89.90
          & 92.90 & 95.00 & 93.50 & 95.30 & 89.00 \\
  & T-QPM & 95.59 & 98.40 & 97.91 & 99.16 & 95.87
          & 99.28 & 98.85 & 99.00 & 99.50 & 98.00
          & 97.40 & 98.60 & 98.00 & 98.90 & 95.50 \\[3pt]
\bottomrule
\end{tabular}%
}
\end{table*}

\begin{table}[!htbp]
\centering
\small
\caption{Hyperparameter sweep over $\beta$. AUROC (\%) $\uparrow$ for ViT-16 and ViT-32 backbones.(ID dataset: Clear100, OOD: COCO)}
\label{tab:sweep_beta}
\setlength{\tabcolsep}{6pt}
\begin{tabular}{c cc}
\toprule
$\beta$ & \makecell{ViT-16\\AUROC (\%)$\uparrow$} & \makecell{ViT-32\\AUROC (\%)$\uparrow$} \\
\midrule
0.0 & 91.20 & 89.40 \\
0.5 & 93.80 & 91.90 \\
1.0 & \textbf{96.32} & 93.60 \\
1.5 & 95.70 & 94.80 \\
2.0 & 94.80 & 94.10 \\
3.0 & 93.10 & 92.40 \\
4.0 & 91.40 & 90.80 \\
5.0 & 89.20 & 88.60 \\
6.0 & 86.80 & 86.20 \\
8.0 & 83.40 & 82.70 \\
\bottomrule
\end{tabular}
\end{table}

\begin{table}[!htbp]
\centering
\small
\caption{Hyperparameter sweep over $\eta$. AUROC (\%) $\uparrow$ for ViT-16 and ViT-32 backbones.(ID dataset: Clear100, OOD: COCO)}
\label{tab:sweep_eta}
\setlength{\tabcolsep}{6pt}
\begin{tabular}{c cc}
\toprule
$\eta$ & \makecell{ViT-16\\AUROC (\%)$\uparrow$} & \makecell{ViT-32\\AUROC (\%)$\uparrow$} \\
\midrule
0.0 & 94.10 & 92.40 \\
0.5 & \textbf{96.32} & 94.60 \\
1.0 & 95.20 & 93.50 \\
1.5 & 93.80 & 92.10 \\
2.0 & 91.90 & 90.20 \\
3.0 & 89.40 & 87.80 \\
5.0 & 85.60 & 84.10 \\
\bottomrule
\end{tabular}
\end{table}

\begin{table}[!htbp]
\centering
\small
\caption{Hyperparameter sweep over $\gamma_{\mathrm{cap}}$. AUROC (\%) $\uparrow$ for ViT-16 and ViT-32 backbones.(ID dataset: Clear100, OOD: COCO)}
\label{tab:sweep_gamma_cap}
\setlength{\tabcolsep}{6pt}
\begin{tabular}{c cc}
\toprule
$\gamma_{\mathrm{cap}}$ & \makecell{ViT-16\\AUROC (\%)$\uparrow$} & \makecell{ViT-32\\AUROC (\%)$\uparrow$} \\
\midrule
0.00 & 86.40 & 84.20 \\
0.02 & 88.80 & 86.60 \\
0.05 & 91.50 & 89.40 \\
0.07 & 93.90 & 91.80 \\
0.10 & \textbf{96.32} & 94.10 \\
0.15 & 95.80 & 95.60 \\
0.20 & 94.60 & 94.30 \\
0.30 & 92.40 & 92.10 \\
0.50 & 89.80 & 89.50 \\
\bottomrule
\end{tabular}
\end{table}

\section{Implementation Details}

\paragraph{Backbone and Encoders.}
T-QPM builds on a frozen  CLIP backbone with either a ViT-B/16 or ViT-B/32
visual encoder ($d=512$). Both the visual encoder $\phi^V$ and text encoder $\phi^T$
are kept entirely frozen throughout all phases of training; no fine-tuning of backbone
parameters is performed. Only two scalar fusion parameters, $\tilde{\beta}$ and
$\tilde{\eta}$, are optimized via gradient descent, with effective weights obtained
as $\beta = \log(1+e^{\tilde{\beta}})$ and $\eta = \log(1+e^{\tilde{\eta}})$
(softplus) to enforce strict positivity. These are initialized at
$\tilde{\beta}=1.0$ and $\tilde{\eta}=0.5$, corresponding to
$\beta \approx 1.31$ and $\eta \approx 0.97$ at the start of training.

\paragraph{ID Text Bank Construction.}
The ID text bank $\mathbf{T}^{\mathrm{ID}} \in \mathbb{R}^{K \times d}$ is
constructed once at initialization via prompt ensembling and remains fixed across
all timesteps. For each class $k$, we encode all $P$ prompt templates (loaded from
\texttt{prompt.txt}) through $\phi^T$, $\ell_2$-normalize each embedding, sum
across templates, and re-normalize. This follows the standard CLIP zero-shot
ensembling protocol.

\paragraph{Visual Prototype Construction.}
At each timestep $t$, per-class visual prototypes $\{\boldsymbol{\mu}_{k,t}\}_{k=1}^{K}$
are recomputed from the current timestep's ID training split. For each image,
we extract the class-attended global feature using the DPM-style spatial
attention mechanism (Phase II), normalize it, and accumulate a per-class sum.
The prototype $\boldsymbol{\mu}_{k,t}$ is the $\ell_2$-normalized mean of all
class-$k$ features. Prototypes are computed exclusively from ID data and are
never exposed to OOD samples.

\paragraph{Optimization.}
Training is sequential across $\mathcal{T}=10$ timesteps using the Adam
optimizer with learning rate $\alpha_{\mathrm{lr}} = 3\times 10^{-3}$, batch
size $64$, and $E=5$ epoch per timestep. The total loss at each mini-batch is:
\begin{equation}
\mathcal{L}_{\mathrm{TOTAL}} = \mathcal{L}_{\mathrm{ID}}
  + \lambda_{\mathrm{cov}}\,\mathcal{L}_{\mathrm{COV}}
  + \lambda_{\mathrm{temp}}\,\mathcal{L}_{\mathrm{TEMP}},
\end{equation}
where $\lambda_{\mathrm{cov}}=0.5$ and $\lambda_{\mathrm{temp}}=1.0$.
$\mathcal{L}_{\mathrm{ID}}$ is a balanced CE loss averaged over
clean and covariate-shifted views. $\mathcal{L}_{\mathrm{COV}}$ is the mean
absolute difference between fused scores of clean and shifted pairs.
$\mathcal{L}_{\mathrm{TEMP}}$ is the two-sided ATC drift penalty between
consecutive timesteps, computed as:
\begin{equation}
\mathcal{L}_{\mathrm{TEMP}} =
  |\mathrm{ATC}_t^{\mathrm{clean}} - \mathrm{ATC}_{t-1}^{\mathrm{clean}}|
+ |\mathrm{ATC}_t^{\mathrm{shift}} - \mathrm{ATC}_{t-1}^{\mathrm{shift}}|,
\end{equation}
where the soft-ATC is a differentiable relaxation of the above-threshold coverage:
\begin{equation}
\mathrm{ATC}_t = \mathbb{E}\left[\sigma\!\left(\frac{\delta - S_{\mathrm{FUSED}}}{\kappa}\right)\right],
\quad \kappa = 0.1.
\end{equation}
At $t=0$, $\mathcal{L}_{\mathrm{TEMP}}=0$ since no previous ATC exists.

\paragraph{Threshold Calibration.}
The detection threshold $\delta$ is calibrated once at $t=0$ as the
$\delta_q = 0.01$ quantile of $S_{\mathrm{FUSED}}$ evaluated on the $t=0$ ID
training split, and is held fixed for all subsequent timesteps. This conservative
quantile ensures that fewer than $1\%$ of clean ID training samples fall below
the threshold, directly minimizing the false negative rate at the calibration
timestep.

\paragraph{Covariate Corruption Pipeline.}
Shifted views for training are generated on-the-fly. Gaussian blur is applied
with kernel size $9$ and $\sigma$ uniformly sampled from $[0.1,\;2.0]$.
JPEG compression is applied at a randomly sampled quality level. All corruptions
are applied in the dataloader using torchvision transforms, with no storage of
pre-corrupted images. The spatial attention weight is $\gamma = 0.2$ and the
CLIP logit temperature is $T = 1.0$ throughout.
\begin{figure*}[!t]
    \centering
    \begin{subfigure}{0.48\linewidth}
        \includegraphics[width=\linewidth]{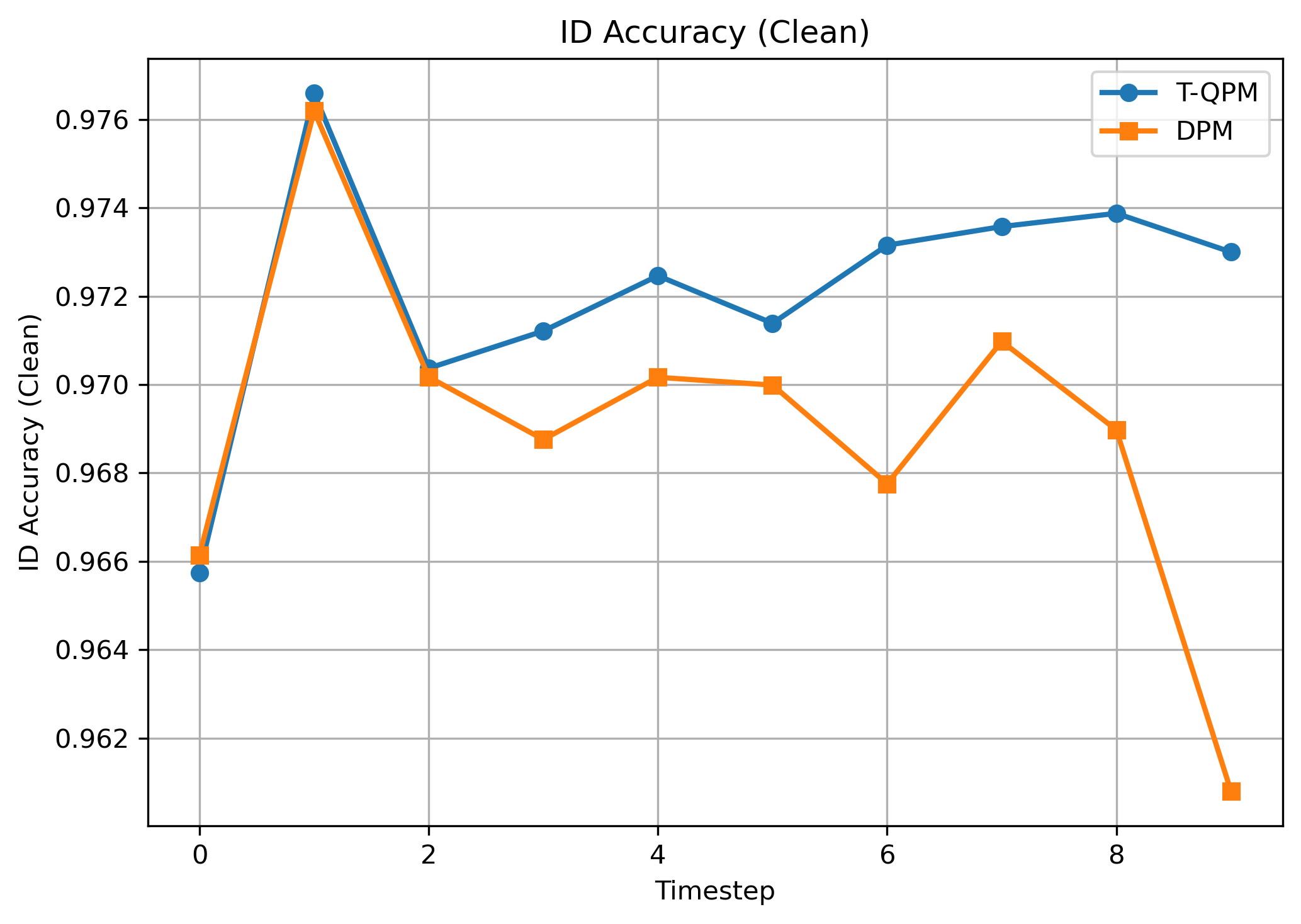}
    \end{subfigure}
    \hfill
    \begin{subfigure}{0.48\linewidth}
        \includegraphics[width=\linewidth]{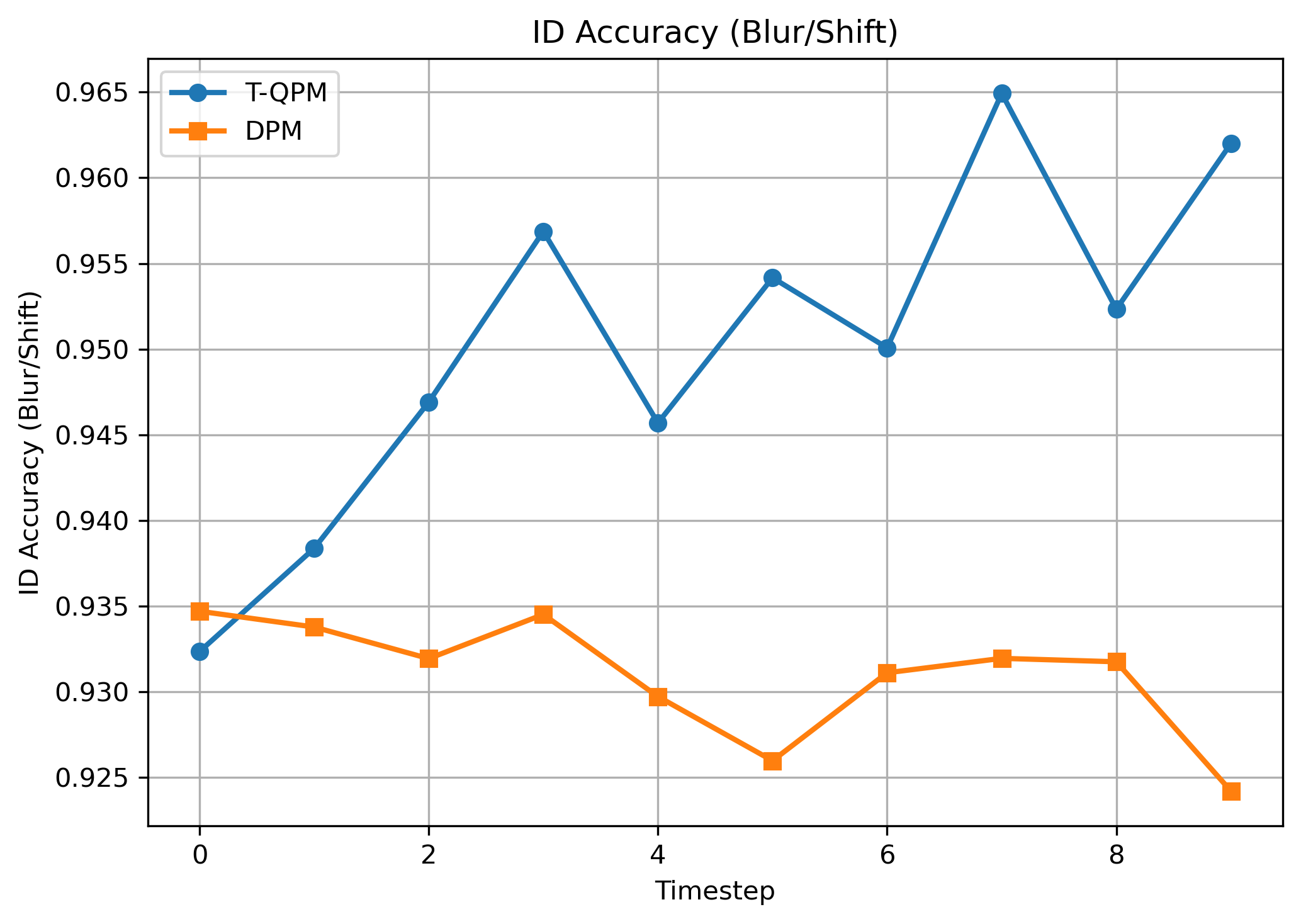}
    \end{subfigure}
    \caption{ID classification accuracy on clean (left) and Gaussian blur-shifted (right) CLEAR100 test sets across all timesteps. Results are averaged over 3 trials.}
    \label{fig:acc_blur}
\end{figure*}

\begin{figure}[H]
    \centering
    \begin{subfigure}{0.7\linewidth}
        \includegraphics[width=\linewidth]{figures/jpeg_compression/id_accuracy_blurshift.png}
   \end{subfigure}
      \caption{JPEG-compressed CLEAR100 test sets across all timesteps. Results are averaged over 3 trials.}
    \label{fig:acc_jpeg}
\end{figure}

Under Gaussian blur corruption (Figure~\ref{fig:acc_blur}), T-QPM consistently 
outperforms DPM across all timesteps on both clean and shifted variants. On clean data, 
both methods begin at comparable accuracy ($\sim$0.966), but T-QPM maintains a stable 
upward trend, reaching $\sim$0.974 by $t{=}8$, while DPM exhibits high variance and 
collapses sharply to $\sim$0.961 at the final timestep. The performance gap is substantially 
amplified under covariate shift: T-QPM sustains blur-shifted accuracy in the range 
0.945--0.965, whereas DPM fluctuates between 0.925--0.935 throughout, indicating that 
T-QPM's quadruple matching better preserves discriminative features under 
low-frequency visual degradation.
Under JPEG compression corruption (Figure~\ref{fig:acc_jpeg}), T-QPM demonstrates an even more 
pronounced advantage. On clean data, T-QPM improves steadily from 0.971 at $t{=}0$ to 
$\sim$0.979 by $t{=}8$, while DPM again degrades sharply at the final timestep ($\sim$0.959). 
More strikingly, on JPEG-corrupted inputs, T-QPM exhibits a consistent upward trajectory 
across all timesteps, reaching $\sim$0.991 at $t{=}7$---while DPM remains nearly flat in 
the 0.920--0.932 range throughout. This $\sim$5--6\% sustained gap under JPEG shift 
suggests that T-QPM's interference-based scoring mechanism is particularly robust to 
high-frequency compression artifacts, which tend to destabilize standard softmax-based 
confidence estimates. Taken together, both figures demonstrate that T-QPM not only 
maintains higher clean accuracy but generalizes significantly better under realistic 
covariate corruptions as temporal drift accumulates.

\paragraph{OOD Dataset Streaming.}
All OOD datasets are used exclusively at inference, never during training.
COCO~\cite{lin2015microsoftcococommonobjects} is loaded from local disk along with captions.
Flickr30K~\cite{plummer2016flickr30kentitiescollectingregiontophrase},
ImageNet-1K-VL-Enriched~\cite{imagenet_captions_hf}, and
CC12M~\cite{changpinyo2021cc12m} are streamed via the HuggingFace
\texttt{datasets} library with a reservoir shuffle buffer of $10{,}000$,
capped at $20{,}000$, $10{,}000$, and $10{,}000$ examples per evaluation,
respectively. Captions for Flickr30K are selected uniformly at random from
the available per-image candidates. A new streaming iterator is instantiated
at each timestep to avoid exhausting the stream.

\paragraph{Reproducibility.}
All experiments use a fixed random seed (default: $1556$), set across
random, numpy, torch, and torch.cuda.
Results are averaged over 3 independent trials with seeds offset by
trial\_id $\in \{0,1,2\}$. All experiments are run on a single
NVIDIA GPU with num\_workers$=4$ for ID dataloaders and
num\_workers$=0$ for HuggingFace streaming OOD loaders.

%% file: sec/2_extensive-related_work.tex
\section{Extended Related Work}

\subsection{Vision-Language Models for OOD Detection}

The emergence of large-scale vision-language models (VLMs), particularly CLIP~\cite{radford2021clip}, has enabled new approaches to out-of-distribution (OOD) detection that leverage semantic information from both visual and textual modalities. MCM~\cite{ming2022mcm} introduces a training-free OOD detection framework that treats textual class embeddings as semantic prototypes and measures the alignment between visual representations and class concepts in the shared CLIP embedding space. MCM demonstrates that multimodal representations significantly improve OOD detection performance compared to conventional image-only approaches. Similarly, ZOC~\cite{esmaeilpour2022zero} utilizes CLIP representations to derive textual descriptions for unseen classes and perform zero-shot OOD detection.

More recently, LoCoOp~\cite{miyai2023locoopfewshotoutofdistributiondetection} improves OOD detection through prompt learning, while NegLabel~\cite{jiang2024negativelabelguidedood} incorporates negative label guidance to better separate in-distribution (ID) and OOD samples in the CLIP embedding space. Among existing VLM-based methods, DPM~\cite{zhang2023dpm} is most closely related to our work. DPM extends CLIP-based OOD detection through dual-pattern matching, combining semantic text matching with visual pattern matching derived from ID training statistics. DPM further introduces DPM-T, which incorporates learnable prompts and projection layers for task adaptation. However, existing VLM OOD detectors assume static deployment environments and do not explicitly model temporal distribution shift. In contrast, our proposed T-QPM extends pattern matching to a temporal multimodal setting by leveraging image-caption pairs, temporal visual prototypes, and four complementary cross-modal matching signals.

\subsection{OWL and OOD Generalization}

Open-world learning (OWL) requires models to simultaneously generalize under distribution shifts and detect semantic OOD samples encountered during deployment~\cite{zhu2024owl}. Existing OOD detection methods typically focus either on semantic OOD detection or domain generalization, while treating these objectives independently.

SCONE~\cite{bai2025feedbirdssconeexploiting} addresses this challenge by introducing an energy-margin framework that jointly separates ID samples, covariate-shifted samples, and semantic OOD samples using unlabeled wild data. Temp-SCONE~\cite{naiknaware2025tempscone} extends SCONE to dynamic environments through temporal regularization based on Average Thresholded Confidence (ATC), encouraging confidence stability across consecutive timesteps.

While Temp-SCONE demonstrates the importance of temporal consistency under evolving distributions, it operates entirely in a unimodal setting using visual features and energy-based scoring. In contrast, T-QPM is designed specifically for multimodal vision-language models and performs OOD detection using image-caption pairs. Furthermore, ATC serves a different purpose in the two frameworks. In Temp-SCONE, ATC is directly used as a confidence signal for temporal regularization, whereas in T-QPM ATC is employed solely as a temporal boundary stabilization mechanism, while the final OOD decision is determined by a fused quadruple-pattern matching score derived from multimodal image-text interactions.

Complementary approaches such as TENT~\cite{wang2020tent} and continual adaptation frameworks~\cite{wu2023meta} attempt to stabilize predictions under distribution shifts through online adaptation. However, these methods do not explicitly distinguish between covariate shift and semantic OOD detection and are not designed for multimodal temporal OOD settings.

\subsection{Temporal Domain Generalization}

Temporal distribution shift has recently emerged as an important challenge in machine learning due to the continuously evolving nature of real-world data. Benchmarks such as CLEAR~\cite{lin2022clearbenchmarkcontinuallearning}, Yearbook, and other temporally partitioned datasets have motivated the development of temporal domain generalization methods that seek to maintain performance as data distributions evolve over time.

Existing temporal generalization approaches primarily focus on classification or regression tasks and evaluate performance using predictive accuracy or regression error. Continuous temporal domain generalization methods typically address adaptation across evolving domains but do not consider semantic OOD detection. Consequently, they cannot directly identify previously unseen classes or detect novel semantic concepts during deployment.

Our work differs from temporal domain generalization approaches in two important ways. First, we address the more challenging open-world setting where both temporal distribution shift and semantic OOD detection must be handled simultaneously. Second, we leverage multimodal image-caption information through a temporal quadruple-pattern matching framework, enabling robust OOD detection and covariate-shift generalization under continuously evolving distributions.